\documentclass[11pt]{article}

\usepackage[utf8]{inputenc}
\usepackage{amsmath, amssymb, amsthm}
\usepackage{fullpage}
\usepackage{hyperref}
\usepackage[round, comma]{natbib}
\usepackage[usenames]{color}
\usepackage{times}

\newcommand{\vknote}[1]{{\color{blue} (VK) #1}}
\newcommand{\eat}[1]{ }

\def\ie{{i.e.},~}
\def\eg{{e.g.},~}
\def\etal{{et al.}~}

\def\moo{\{-1, 1\}}
\def\reals{\mathbb{R}}

\def\true{\textsc{True}}
\def\false{\textsc{False}}
\def\E{\mathbb{E}}
\def\Pol{\mathbb{P}}
\def\chop{\mathrm{chop}}
\def\sign{\mathrm{sgn}}
\def\weight{\mathrm{weight}}
\def\eye{\mathbb{I}}

\def\DISJ{\mathrm{DISJ}}
\def\VC{\mathrm{VC\mbox{-}DIM}}

\def\EX{\mathsf{EX}}
\def\opt{\operatorname{opt}}
\def\err{\operatorname{err}}
\def\falsep{\operatorname{false}_+}
\def\falsen{\operatorname{false}_-}
\def\adegp{\widetilde{\deg}_+}
\def\adeg{\widetilde{\deg}}
\def\adegpe{\widetilde{\deg}_{+, \eps}}
\def\adegn{\widetilde{\deg}_-}
\def\adegne{\widetilde{\deg}_{-, \eps}}
\def\eps{\epsilon}
\def\classF{{\mathcal F}}
\def\rad{{\mathcal R}}
\def\sample{{\mathcal S}}
\def\Eloss{{\mathcal L}}

\newcommand{\linfnorm}[1]{\Vert #1 \Vert_{\infty}}
\newcommand{\lonenorm}[1]{\Vert #1 \Vert_{1}}

\DeclareMathOperator{\sgn}{sgn}
\DeclareMathOperator{\MAJ}{MAJ}
\DeclareMathOperator{\OR}{OR}
\DeclareMathOperator{\AND}{AND}
\DeclareMathOperator{\poly}{poly}

\newtheorem{theorem}{Theorem}

\newtheorem{definition}{Definition}
\newtheorem{remark}{Remark}
\newtheorem{corollary}{Corollary}

\newtheorem{fact}{Fact}

\title{Distribution-Independent Reliable Learning}
\author{Varun Kanade\thanks{University of California, Berkeley. Email:
\texttt{vkanade@eecs.berkeley.edu}} \and 
Justin Thaler\thanks{The Simons Institute for the Theory of Computing at UC
Berkeley. Email: \texttt{jthaler@seas.harvard.edu}}} 

\begin{document}

\date{}

\maketitle 

\begin{abstract}
% \eat{
% The best positive reliable classifier from a class is one that makes no false
% positive errors and minimises false negative errors. The goal in the positive
% reliable agnostic framework is to output a hypothesis that makes almost no false
% positive errors and has false positive error rate not much higher than the best
% positive reliable classifier from a class. A partial classifier is allowed to
% predict unknown or ($?$) on some inputs; the best fully reliable \emph{partial
% classifier} is one that makes no errors and minimises the probability of
% predicting $?$. The goal in fully reliable learning is to output a hypothesis
% that is almost as good as the best partial classifier from a class.
% }
We study several questions in the \emph{reliable agnostic} learning framework of
Kalai et al. (2009), which captures learning tasks in which one type of error is costlier than other types. A positive reliable classifier is one that makes no false
positive errors.  The goal in the \emph{positive reliable} agnostic framework is
to output a hypothesis with the following properties: (i) its false positive
error rate is at most $\epsilon$, (ii) its false negative error rate is at most
$\epsilon$ more than that of the best positive reliable classifier from the
class.  A closely related notion is \emph{fully reliable} agnostic learning,
which considers \emph{partial classifiers} that are allowed to predict ``unknown'' on
some inputs.  The best fully reliable partial classifier is one that
makes no errors and minimizes the probability of predicting ``unknown'', and the
goal in fully reliable learning is to output a hypothesis that is almost as good
as the best fully reliable partial classifier from a class.

For distribution-independent learning,  the best known algorithms for PAC
learning typically utilize polynomial threshold representations, while the state
of the art agnostic learning algorithms use point-wise polynomial
approximations.  We show that \emph{one-sided polynomial approximations}, an
intermediate notion between polynomial threshold representations and point-wise
polynomial approximations, suffice for learning in the reliable agnostic
settings. We then show that majorities can be fully reliably learned and
disjunctions of majorities can be positive reliably learned, through
constructions of appropriate one-sided polynomial approximations.  Our fully
reliable algorithm for majorities provides the first evidence that fully
reliable learning may be strictly easier than agnostic learning.  Our algorithms
also satisfy strong attribute-efficiency properties, and in many cases they
provide smooth tradeoffs between sample complexity and running time.

% For the class of majorities,
%we show that their one-sided approximate degree is much smaller than their
%approximate degree, giving concrete evidence that reliable learning is strictly
%easier than agnostic learning. Previously, only reductions from agnostic
%learning to reliable learning were known in the work of Kalai et al. (2009). 

%In addition we show that our algorithms are strongly attribute-efficient, in
%that if the functions in the class depend only on $k \ll n$ variables, the
%dependence of sample complexity is only polylogarithmic. Our work also gives
%smooth tradeoffs between sample complexity and running time for agnostic
%learning of disjunctions and positive reliable learning of DNFs, achieved
%through weight-degree tradeoffs for approximating polynomials.

\end{abstract}
\newpage

%\begin{keywords}
%reliable learning, agnostic learning, attribute-efficiency, polynomial approximations
%\end{keywords}

\section{Introduction}
\label{sec:intro}
In many learning tasks, one type of error is costlier than other types. For
example, when detecting spam messages, an important mail marked as spam (a false
positive) is a major problem, whereas false negatives are only a minor nuisance.
On the other hand, in settings such as detecting failures in an electric
network, false negatives may be very harmful. In yet other settings, it may be
better to refrain from making a prediction at all, rather than make a wrong one,
\eg when detecting medical ailments. Following \citet{KKM:2012}, we call these
kinds of tasks \emph{reliable learning}. Closely related tasks have been widely
studied in the statistics and machine learning literature. We discuss some of
this work later; here, we simply note that the work of Kalai \etal and the
present work depart from much of the extant literature by emphasizing
computational considerations, \ie by focusing on ``fast'' algorithms, and
guarantees with respect to the \emph{zero-one} loss.

\citet{KKM:2012} introduced a formal framework to study reliable learning in the
\emph{agnostic} setting, which is a challenging model that captures the problem
of learning in the presence of adversarial classification noise.  In particular,
the goal of an agnostic learning algorithm is to produce a hypothesis that has
error that is at most $\epsilon$ higher than the best from a certain class. A
false positive error occurs when the true label is negative, but the hypothesis
predicts positive. Analogously, a false negative error occurs when the true
label is positive, but the hypothesis predicts negative.

The best positive reliable classifier from a class is one that make no false
positive errors and minimizes false negative errors. In the \emph{positive
reliable} learning setting, the goal of a learning algorithm is to output a
hypothesis with the following properties: (i) its false positive error rate is
at most $\epsilon$, (ii) its false negative error rate is at most $\epsilon$
more than that of the best positive reliable classifier from the class. The
notion of \emph{negative reliable} learning is identical with the roles of false
positive and false negatives reversed.

\citet{KKM:2012} also introduced the notion of full reliability. A partial
classifier is one that is allowed to sometimes predict ``$?$'' or
\emph{unknown}. The best partial classifier from a class is one that makes no
errors and minimizes the probability of predicting $?$.  In the \emph{fully
reliable} learning setting, the goal of the learning algorithm is to output a
hypothesis $h : X \rightarrow \{-1, ?, +1 \}$ such that (i) the error of $h$ is
at most $\epsilon$, (ii) the probability that $h$ predicts `$?$' is at most
$\epsilon$ more than the best partial classifier from the class.
%
%Kalai et al. (2012) also introduced the notion of full reliability. Here, a pair
%of concepts cf = (c+ , c− ) together make a classifier which may output from the
%set {−1, ?, +1}.  For any x, if c+ (x) = c− (x), then cf (x) = c+ (x) = c− (x);
%otherwise cf (x) =?. Suppose c∗ f is such a classifier that makes no errors, and
%subject to that minimizes the probability of predicting ‘?’. Then, in the fully
%reliable learning setting, the goal of the learning algorithm is to output a
%hypothesis h : X → {−1, ?, +1} such that (i) the error of h is at most , (ii)
%the probability that h predicts ‘?’ is at most more than that of c∗ f.

\subsection{Our Contributions}

%\eat{We begin by
%describing an extremely simple polynomial time algorithm for positive reliable learning of disjunctions. However, for more complex classes of functions, such simple and efficient algorithms are not known for reliable learning. Our main contributions are in presenting
%new reliable learning algorithms for a variety of concept classes.}%
%
In this work, we focus on distribution-independent reliable learning, and our
main technical contribution is to give new reliable learning algorithms for a
variety of concept classes.  We now place our reliable learning algorithms in
the context of prior work on PAC and agnostic learning. 

The \emph{threshold degree} of a Boolean function $f: \{-1, 1\}^n \rightarrow
\{-1, 1\}$ is the least degree of a real polynomial that agrees in sign with $f$
at all inputs $x \in \{-1, 1\}^n$. The \emph{approximate degree} (with error
parameter $\eps$) of $f$ is the least degree of a real polynomial that
point-wise approximates $f$ to error at most $\eps$.  It is well-known that
concept classes with low threshold degree can be efficiently learned in
Valiant's PAC model under arbitrary distributions; indeed, threshold degree
upper bounds underlie the fastest known PAC learning algorithms for a variety of
fundamental concept classes, including DNF and read-once formulae \citep{klivansservediodnfs, readonceformulae}. Meanwhile, concept
classes with low approximate degree can be efficiently learned in the agnostic
model, a connection that has yielded the fastest known algorithms for
distribution-independent agnostic learning \citep{KKMS:2005}.

We show that concept classes with low \emph{one-sided} approximate degree can be
efficiently learned in the \emph{reliable} agnostic model.  Here, one-sided
approximate degree is an intermediate notion that lies between threshold degree
and approximate degree; we defer a formal definition to Section
\ref{sec:onesidedpoly}.  One-sided approximate degree was introduced in its
present form by \cite{BT:2013} (see also \citep{sherstovthresholddegree}),
though equivalent \emph{dual} formulations had been used in several prior works
\citep{gavinskysherstov, andortree1, andortree2}. Our learning algorithm is
similar to the $L_1$ regression algorithm of \citet{KKMS:2005}; however, the
analysis of our algorithm is more delicate. Specifically, due to asymmetry in
the type of errors considered in the reliable setting, our analysis requires the use of \emph{two} loss functions.
On one side, we use the \emph{hinge} loss rather than $L_1$
loss, since one-sided polynomial approximations may be unbounded, and on the
other, we use a non-convex Lipschitz approximation to the \emph{zero-one} loss.

We identify important concept classes, such as majorities and intersections of
majorities, whose one-sided approximate degree is strictly smaller than its
approximate degree. Consequently, we obtain reliable (in the case of majorities,
even \emph{fully reliable}) agnostic learning algorithms that are strictly more
efficient than the fastest known algorithms in the standard agnostic setting.  Our
fully reliable learning algorithm for majorities gives the first indication that
fully reliable learning may be strictly easier than agnostic learning. Finally,
we show how to obtain smooth \emph{tradeoffs} between sample complexity and
runtime of algorithms for agnostically learning conjunctions, and for positive
reliably learning DNF formulae. 

In more detail, we summarize our new algorithmic results as follows (for
simplicity, we omit dependence on the error parameter $\eps$ of the learning
algorithm from this overview). We give:
\begin{itemize}
\item A simple $\poly(n)$ time algorithm for positive reliable learning of
disjunctions.
\item A $2^{\tilde{O}(\sqrt{n})}$ time algorithm for fully reliable learning of
majorities.  In contrast, no $2^{o(n)}$-time algorithm for agnostically learning majorities
is known in the arbitrary-distribution setting.
\item A $2^{\tilde{O}(\sqrt{n \log m})}$ time algorithm for positive (respectively,
negative) reliable learning of disjunctions (respectively, conjunctions) of $m$
majorities. 
\item For any $d > n^{1/2}$, a $n^{O(d)}$-time algorithm with sample complexity
$n^{O(n/d)}$ for agnostically learning conjunctions, and for positive reliably
learning $\poly(n)$-term DNFs.
\end{itemize}

All of our algorithms also satisfy very strong \emph{attribute-efficiency}
properties: if the function being learned depends on only $k \ll n$ of the $n$
input variables, then the sample complexity of the algorithm depends only
logarithmically on $n$, though the dependence on $k$ may be large. We defer a
detailed statement of these properties until Section \ref{sec:lp}.

\subsection{Related Work}
\label{sec:related}

The problem of reliable classification can be expressed as minimizing a loss
function with different costs for false negative and false positive errors (see
\eg \citep{Domingos:1999, Elkan:2001}).  Reliable learning is also related to
the Neyman-Pearson criterion from classical statistics --- where it has been
shown that the optimal strategy to minimize one type of errors, subject to the
other type being bounded, is to threshold the ratio of the
likelihoods~\citep{NP:1933}. However, the main problem is \emph{computational};
in general the loss functions with different costs from these prior works are
not convex and the resulting optimization problems are intractable. The work of
\citet{KKM:2012} and the present work departs from the prior work in that we
focus on algorithms with both provable guarantees on their generalization error
with respect to the \emph{zero-one} loss, and bounds on their computational
complexity, rather than focusing purely on statistical efficiency.

\citet{KKM:2012} showed that any concept class that is agnostically learnable
under a fixed distribution is also learnable in the reliable agnostic learning
models under the same distribution.  Furthermore, they showed that if a class
$C$ is agnostically learnable, the class of disjunctions of concepts in $C$ is
positive reliably learnable (and the class of conjunctions of concepts in $C$ is
negative reliably learnable). Finally, they showed that if $C$ is both positive
and negative reliably learnable, then it is also fully reliably learnable.
Using these general reductions, Kalai \etal showed that the class of
polynomial-size DNF formulae is positive reliable learnable under the uniform
distribution in polynomial time with membership queries (it also follows from
their reductions and the agnostic learning algorithm of \cite{KKMS:2005}
described below that DNF formulae can be positive reliably learned in the
distribution-independent setting in time $2^{\tilde{O}(\sqrt{n})}$).
Agnostically learning DNFs under the uniform distribution remains a notorious
open problem, and thus their work gave the first indication that positive (or
negative) reliable learning may be easier than agnostic learning.  %We stress,
%however, that the work of Kalai \etal does not shed light on whether fully
%reliable learning is strictly easier than agnostic learning.

\cite{KKMS:2005} put forth an algorithm for agnostic learning based on
\emph{$L_1$-regression}.  Our reliable learning algorithms based on one-sided
approximate degree upper bounds is inspired by and generalizes their work.
\cite{klivanssherstovagnostic} subsequently established strong
\emph{limitations} on the $L_1$-regression approach of \cite{KKMS:2005}, proving
lower bounds on the size of \emph{any} set of ``feature functions'' that can
point-wise approximate the concept classes of majorities and conjunctions. Their
work implies that substantially new ideas will be required to obtain a
$2^{o(n)}$-time distribution-independent agnostic learning algorithm for
majorities, or a $2^{o(\sqrt{n})}$ time algorithm for agnostically learning
conjunctions.

Finally, lower bounds on one-sided approximate degree have recently been used in
several works to establish strong limitations on the power of existing
algorithms for PAC learning~\citep{BT:2013, sherstovthresholddegree,
andortree2, gavinskysherstov, andortree1}.  In this paper, we do the opposite:
we use one-sided approximate degree upper bounds to give new, more efficient
learning algorithms in the reliable agnostic setting. 

%\vknote{Copy paste; trim down; and reword from KKM 2012 paper.}
%\vknote{Say that attribute-efficient agnostic learning has appeared hidden in
%some work of Vitaly Feldman.}

\subsubsection*{Organization}

In Section \ref{sec:setting}, we review the definitions of agnostic learning,
and positive, negative and fully reliable learning.  In
Section~\ref{sec:results}, we first give a very simple polynomial time algorithm
for positive reliable learning of disjunctions, before showing that appropriate
one-sided polynomial approximations for function classes result in efficient
reliable learning algorithms. In Section~\ref{sec:one-sided}, we give
constructions of one-sided approximating polynomials for (conjunctions and
disjunctions of) low-weight halfspaces, as well as for DNF and CNF formulae. In
Section~\ref{sec:tradeoff}, we show how tradeoffs may be obtained for some of
our results between sample complexity and running time, and in
Section~\ref{sec:limitations}, we describe some limitations of our approach. We
end with a discussion and directions for future work.

\section{Preliminaries and Definitions}
\label{sec:setting}
Let $X = \moo^n$ denote the instance space. Let $C$ denote a concept class of
functions from $X \rightarrow \moo$.  We will use the convention that $+1$ is
$\true$ and $-1$ is $\false$.\footnote{This is contrary to the usual convention
in the analysis of Boolean functions. However, our definitions would appear a
lot more counter-intuitive in the standard notation.} For ease of notation, we
will keep the parameter $n$, corresponding to the length of input vectors,
implicit in the discussion. Let $c, h : X \rightarrow \moo$ be Boolean
functions. For a distribution $\mu$ over $X$, let $\err(h,(\mu, c)) = \Pr_{x
\sim \mu} [c(x) \neq h(x)]$, denote the error of hypothesis $h$ with respect to
concept $c$ and distribution $\mu$. Let $\EX(c, \mu)$ denote the example oracle,
which when queried returns a pair $(x, c(x))$, where $x$ is drawn from
distribution $\mu$, and $c$ is a concept in $C$. Since the algorithms presented
in this paper typically do not run in polynomial time, we do not impose such a
condition in the definitions of learnability. We will explicitly mention the
running time and sample complexity in all of our results.

\begin{definition}[PAC Learning~\citep{Valiant:1984}] A concept class $C$ is
probably approximately correct (PAC) learnable if there exists a learning
algorithm that for any $c \in C$, any distribution $\mu$ over $X$, any $\epsilon,
\delta > 0$, with access to an example oracle $\EX(c, \mu)$, outputs a hypothesis
$h$, such that with probability at least $1 - \delta$, $\err(h, (\mu, c)) \leq
\epsilon$. \end{definition}

In the case of agnostic learning, the data may come from an arbitrary joint
distribution on examples and labels. Let $D$ denote a distribution over $X
\times \moo$. Also, let $\err(h, D) = \Pr_{(x, y) \sim D}[ h(x) \neq y]$,
denote the error of $h$ with respect to $D$, and let $\EX(D)$ denote the
example oracle which when queried returns $(x, y) \sim D$.

\begin{definition}[Agnostic Learning~\citep{Haussler:1992, KSS:1994}] A concept
class $C$ is agnostically learnable if there exists a learning algorithm that
for any distribution $D$ over $X \times \moo$, any $\epsilon, \delta > 0$, with
access to example oracle $\EX(D)$, outputs a hypothesis $h$, such that with
probability at least $1 - \delta$, $\err(h, D) \leq \opt + \epsilon$, where
$\opt = \displaystyle\min_{c \in C} \err(c, D)$.
\end{definition}

\subsection{Reliable Learning}

We review the various notions of reliable agnostic learning proposed by
\citet{KKM:2012}. As in the case of agnostic learning, the data comes from an
arbitrary joint distribution $D$ over $X \times \moo$. For a Boolean function,
$h : X \rightarrow \moo$, define the false positive error ($\falsep$) and the false
negative error ($\falsen$) with respect to $D$ as follows:
\begin{align*}
\falsep(h, D) &= \Pr_{(x, y) \sim D}[h(x) = 1 \wedge y = -1 ] \\
\falsen(h, D) &= \Pr_{(x, y) \sim D}[h(x) = -1 \wedge y = +1 ]
\end{align*}
Let $C$ denote the concept class of interest for learning. For a distribution
$D$, define the following:
\begin{align*}
C^+(D) &= \{ c \in C ~|~ \falsep(c, D) = 0 \} \\
C^-(D) &= \{ c \in C ~|~ \falsen(c, D) = 0 \}
\end{align*}
We call the concepts in $C^+$ (respectively, $C^-$) positive (respectively,
negative) reliable with respect to $D$. Below we define positive and negative
reliable learning. In short, positive reliable learning requires that the
learning algorithm produce a hypothesis that makes (almost) no false positive
errors, while simultaneously minimizing false negative errors. Likewise, in the
case of negative reliable learning, the learning algorithm must output a
hypothesis that makes (almost) no false negative errors, while simultaneously
minimizing false positive errors. Although the definitions of positive and
negative reliable learning are entirely symmetric, we define the two separately
for the sake of clarity.

\begin{definition}[Positive Reliable Learning~\citep{KKM:2012}] A concept class
$C$ is positive reliably learnable if there exists a learning algorithm that
for any distribution $D$ over $X \times \moo$, and any $\epsilon, \delta > 0$, when given
access to the example oracle $\EX(D)$, outputs a hypothesis $h$ that satisfies
the following with probability at least $1 - \delta$, 
\begin{enumerate}
\item $\falsep(h, D) \leq \epsilon$ 
\item $\falsen(h, D) \leq \opt^+ + \epsilon$, where $\opt^+ = \displaystyle\min_{c \in C^+(D)} \falsen(c, D)$ 
\end{enumerate}
We refer to $\eps$ as the \emph{error parameter} of the learning algorithm.
\end{definition}

\begin{definition}[Negative Reliable Learning~\citep{KKM:2012}] A
concept class $C$ is negative reliably learnable, if there exists a
learning algorithm that for any distribution $D$ over $X \times \moo$, any
$\epsilon, \delta > 0$, with access to the example oracle $\EX(D)$, outputs a
hypothesis $h$, that satisfies the following with probability at least $1 -
\delta$, 
\begin{enumerate}
\item $\falsen(h, D) \leq \epsilon$
\item $\falsep(h, D) \leq \opt^- + \epsilon$, where $\opt^- =
\displaystyle\min_{c \in C^-(D)} \falsep(c, D)$
\end{enumerate}
We refer to $\eps$ as the \emph{error parameter} of the learning algorithm.
\end{definition}

\citet{KKM:2012} also define a notion of \emph{fully reliable learning}. Here,
the learning algorithm may output a \emph{partial classifier} $h : X \rightarrow
\{-1, ?, +1\}$, and must make (almost) no errors, while simultaneously
minimizing the probability of abstaining from prediction, \ie outputting $?$.
Again, recall that we are in the agnostic setting, and let $D$ be an arbitrary
distribution over $X \times \moo$. For some partial classifier,  $h : X
\rightarrow \{-1, ?, +1\}$, let $\err(h, D) = \Pr_{(x, y) \sim D}[h(x) = -y]$
denote the error, and $?(h, D) = \Pr_{(x, y) \sim D}[h(x) = ?]$ denote the
uncertainty of $h$. From a concept class $C$, each pair of concepts defines a
partial classifier, $c_p = (c_+, c_-)$, defined as: $c_p(x) = c_+(x)$, if
$c_+(x) = c_-(x)$, and $c_p(x) = ?$ otherwise. Let $C^f(D) = \{ c_p = (c_+, c_-)
~|~ \err(c_p, D) = 0 \}$ denote the fully reliable partial classifiers with
respect to distribution $D$.  Formally, fully reliable learning is defined as:

\begin{definition}[Fully Reliable Learning~\citep{KKM:2012}] A concept class $C$
is fully reliable learnable, if there exists a learning algorithm that for any
distribution $D$ over $X \times \moo$, any $\epsilon, \delta > 0$, with access
to the example oracle $\EX(D)$, outputs a partial hypothesis $h : X \rightarrow
\{-1, ?, +1\}$, that satisfies the
following with probability at least $1 - \delta$, 
\begin{enumerate}
\item $\err(h, D) \leq \epsilon$
\item $?(h, D) \leq \opt^? + \epsilon$, where $\opt^? = \underset{c_p \in
C^f(D)}{\min}
?(c_p, D)$
\end{enumerate}
We refer to $\epsilon$ as the \emph{error parameter} of the learning algorithm.
\end{definition}

\citet{KKM:2012} showed the following simple result.
\begin{theorem}[\citep{KKM:2012}] 
\label{thm:fullyreliable} If a concept class $C$ is positive and
negative reliable learnable in time $T(n, \epsilon)$ and with sample complexity
$S(n, \epsilon)$, then $C$ is fully reliable learnable in time $O(T(n, \epsilon/4))$
and sample complexity $O(S(n, \epsilon/4))$. 
\end{theorem}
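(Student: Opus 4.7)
The plan is to run the positive and negative reliable learners in parallel and merge their outputs. Invoke each with error parameter $\eps/4$ and confidence $\delta/2$, obtaining $h^+$ with $\falsep(h^+, D) \leq \eps/4$ and $\falsen(h^+, D) \leq \opt^+ + \eps/4$, and $h^-$ with $\falsen(h^-, D) \leq \eps/4$ and $\falsep(h^-, D) \leq \opt^- + \eps/4$. A union bound over the two invocations gives overall success probability at least $1 - \delta$, and the total time and sample complexity are $O(T(n, \eps/4))$ and $O(S(n, \eps/4))$. Define the partial hypothesis $h$ by $h(x) = h^+(x)$ when $h^+(x) = h^-(x)$, and $h(x) = ?$ otherwise.

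For labeling error, any mistake of $h$ requires $h^+(x) = h^-(x) \neq y$, which is either a false positive of $h^+$ (if $h(x) = +1$) or a false negative of $h^-$ (if $h(x) = -1$); hence $\err(h, D) \leq \falsep(h^+, D) + \falsen(h^-, D) \leq \eps/2 \leq \eps$. For uncertainty, I would decompose $?(h, D) = \Pr_D[h^+(x) \neq h^-(x)]$ and split by the true label: the event $h^+ = -1 \wedge h^- = +1$ contributes at most $\falsen(h^+, D) + \falsep(h^-, D) \leq \opt^+ + \opt^- + \eps/2$, while the event $h^+ = +1 \wedge h^- = -1$ contributes at most $\falsep(h^+, D) + \falsen(h^-, D) \leq \eps/2$. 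Summing gives $?(h, D) \leq \opt^+ + \opt^- + \eps$.

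To promote this to the desired bound $?(h, D) \leq \opt^? + \eps$, I would establish the structural identity $\opt^? = \opt^+ + \opt^-$. The direction $\opt^? \leq \opt^+ + \opt^-$ is immediate: the pair $(c^+_\star, c^-_\star) \in C^+(D) \times C^-(D)$ of one-sided optima automatically lies in $C^f(D)$ (since $\falsep(c^+_\star) = 0$ forces $y = +1$ whenever $c^+_\star = c^-_\star = +1$, and symmetrically for $-1$), and its uncertainty equals $\falsen(c^+_\star) + \falsep(c^-_\star) = \opt^+ + \opt^-$. The reverse inequality $\opt^+ + \opt^- \leq \opt^?$ is the main obstacle; the natural witness comes from associating to any $(c_+, c_-) \in C^f(D)$ the pair $(c_+ \wedge c_-, c_+ \vee c_-)$ formed by pointwise conjunction and disjunction, which by the zero-error property of $(c_+, c_-)$ belongs to $C^+(D) \times C^-(D)$ and whose one-sided errors exactly account for $?((c_+, c_-), D)$, so that $\opt^+ + \opt^- \leq ?((c_+, c_-), D) = \opt^?$.
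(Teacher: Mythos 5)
Your reduction itself (run both learners with error parameter $\eps/4$, merge into a partial hypothesis, and bound $\err(h,D)$ by $\falsep(h^+,D)+\falsen(h^-,D)$ and $?(h,D)$ by the four cross terms) is the standard argument and is carried out correctly. The problem is the last step, which you yourself flag as ``the main obstacle'': the inequality $\opt^+ + \opt^- \leq \opt^?$. Your proposed witnesses $c_+ \wedge c_-$ and $c_+ \vee c_-$ are pointwise combinations of two concepts and need not belong to $C$; since $\opt^+$ and $\opt^-$ are minima over $C^+(D)\subseteq C$ and $C^-(D)\subseteq C$, exhibiting reliable functions \emph{outside} $C$ with the right one-sided errors proves nothing about $\opt^+$ or $\opt^-$. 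This is not a removable technicality: under the literal definition of $C^f(D)$ given in Section 2 (all pairs of concepts whose induced partial classifier has zero error), the inequality is simply false. Take five points $Q$ (mass $1-u$, label $+$) and $P_1,\dots,P_4$ (mass $u/4$ each, labels $-,-,+,+$), and $C=\{c_1,c_2\}$ with $c_1 = (+,+,-,-,+)$ and $c_2=(+,-,+,+,-)$ on $(Q,P_1,P_2,P_3,P_4)$. Each $c_i$ has both a false positive and a false negative, so $C^+(D)=C^-(D)=\emptyset$ and both learners' second guarantees are vacuous (e.g.\ $h^+\equiv -1$, $h^-\equiv +1$ are legal outputs, giving $?(h,D)=1$), yet $(c_1,c_2)\in C^f(D)$ with $?(c_p,D)=u$, which can be arbitrarily small.

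The resolution is that the benchmark class must be read (as in \citet{KKM:2012}) as consisting of pairs with $c_+\in C^+(D)$ and $c_-\in C^-(D)$; any such pair automatically has $\err(c_p,D)=0$. Under that reading the inequality you need holds with equality and the witnesses are $c_+$ and $c_-$ \emph{themselves} --- no closure under $\wedge$ or $\vee$ is required. Concretely: $\falsep(c_+,D)=0$ forces $c_+(x)=-1$ almost surely on $\{y=-1\}$, and $\falsen(c_-,D)=0$ forces $c_-(x)=+1$ almost surely on $\{y=+1\}$; hence $\Pr[c_+=+1 \wedge c_-=-1]=0$, while $\Pr[c_+=-1\wedge c_-=+1] = \Pr[c_+=-1 \wedge y=+1] + \Pr[c_-=+1\wedge y=-1] = \falsen(c_+,D)+\falsep(c_-,D)$. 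Therefore $?(c_p,D)=\falsen(c_+,D)+\falsep(c_-,D)\geq \opt^+ + \opt^-$ for every benchmark pair, which is exactly what your second paragraph needs. You should replace the $\wedge/\vee$ construction with this calculation (and note the needed restriction on $C^f(D)$); everything else in your write-up then goes through.
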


\subsection{Approximating Polynomials}
Throughout, if $p: \{-1, 1\}^n \rightarrow \mathbb{R}$ is a real polynomial,
$\deg(p)$ will denote the total degree of $p$.
Let $f: \{-1, 1\}^n \rightarrow \{-1, 1\}$ be a Boolean function.
We say that a polynomial $p: \{-1, 1\}^n \rightarrow \{-1, 1\}$ 
is an $\eps$-approximation for $f$ 
if $|p(x) - f(x)| \leq \eps$ for all $x \in \{-1, 1\}^n$. 
We let $\adeg_{\eps}(f)$ denote the least 
degree of an $\eps$-approximation for $f$. 
We define $\adeg(f) = \adeg_{1/3}(f)$
and refer to the \emph{approximate degree} of $f$ 
without qualification.
The constant $1/3$
is arbitrary and is chosen by convention.

\subsection{One-sided Approximating Polynomials}
\label{sec:onesidedpoly}

We define the notion of one-sided approximating polynomials. The definitions as
they are presented here essentially appeared in prior work of \cite{BT:2013} (see
also \cite{sherstovthresholddegree}),
who only required the notion we refer to as positive one-sided approximate degree.
Here, we explicitly distinguish between positive and negative one-sided approximations.
  
\begin{definition}[Positive One-Sided Approximating Polynomial] Let $f : \moo^n
\rightarrow \moo$ be a Boolean function. We say that a polynomial $p$ is a
positive one-sided $\epsilon$-approximation for $f$ if $p$ satisfies the
following two conditions.  
\begin{enumerate}
\item For all $x \in f^{-1}(1)$, $p(x) \in [1 - \epsilon, \infty)$
\item For all $x \in f^{-1}(-1)$, $p(x) \in [-1 - \epsilon, -1 + \epsilon]$. 
\end{enumerate}
\end{definition}
  
Analogously, we say that $p$ is a negative one-sided $\epsilon$-approximation
for $f$ if $p$ satisfies:
\begin{enumerate}
\item For all $x \in f^{-1}(1)$, $p(x) \in [1 - \epsilon, 1 + \epsilon]$. 
\item For all $x \in f^{-1}(-1)$, $p(x) \in (-\infty, -1 + \epsilon]$. 
\end{enumerate}

We define the \emph{positive and negative one-sided approximate degrees} of $f$,
denoted $\adegpe(f)$ and $\adegne(f)$ respectively, to be the minimum degree of
a positive (respectively, negative) one-sided $\eps$-approximating polynomial
$p$ for $f$. We define $\adegp:= \widetilde{\deg}_{+, 1/3}$ and $\adegn:= \widetilde{\deg}_{-, 1/3}$, and refer to these quantities as the \emph{positive and negative one-sided 
approximate degrees} of $f$ without qualification.

For a polynomial $p: \{-1, 1\}^n \rightarrow \{-1, 1\}$, we define its \emph{weight} to be the sum of
the absolute values of its coefficients and denote it by $\weight(p)$. Let $C$
be a concept class of Boolean functions; we say that $C$ is positive one-sided
$\epsilon$-approximated by degree $d$ and weight $W$ polynomials, if the
following is true: for every $c \in C$, there exists a polynomial $p$ of
weight at most $W$ and degree at most $d$, such that $p$ is a positive
one-sided $\epsilon$-approximation of $c$.  An analogous definition can be made
for the negative one-sided $\epsilon$-approximation of a concept class.

\subsection{Additional Notation}
Throughout this paper, we use $\tilde{O}$ to hide factors
polylogarithmic in $n$ and $\log(1/\eps)$. We also define
$\sgn(t) = -1$ if $t \leq 0$ and 1 otherwise.

\subsection{Generalization Bounds}
\label{sec:rademacher}

We review the basic results required to bound the generalization error of
our algorithms for reliable agnostic learning.
Let $\classF : X \rightarrow \reals$ be a function class.
Let $\epsilon_1, \dots, \epsilon_n$ independently take values in $\{-1, +1\}$ with equal
probability, and let the variables $x_1, \ldots, x_n$ be chosen i.i.d. from some
distribution $\mu$ over $X$. 
Then the
Rademacher complexity of $\classF$, denoted $\rad_m(\classF)$, is defined as:
\begin{align*}
\rad_m(\classF) &= \E\left[ \sup_{f \in \classF} \frac{1}{n} \sum_{i=1}^m f(x_i)
\epsilon_i \right],
\end{align*}

Rademacher complexities have been widely used in the statistical learning theory
literature to obtain bound on generalization error. Here, we only cite results that are
directly relevant to our work. Suppose $D$ is some distribution over $X \times \{-1,
1\}$. Let $\ell : \reals \times \{-1, 1\} \rightarrow \reals^+$ be a loss
function. For a function, $f : X \rightarrow \reals$, the expected loss is given
by $\Eloss(f) = \E_{(x, y) \sim D}[\ell(f(x), y)]$. For a sample, $\langle (x_i, y_i) \rangle_{i=1}^m$,
let $\hat{\Eloss}(f) = \frac{1}{m}\sum_{i=1}^m \ell(f(x_i), y_i)$ denote the
empirical loss. \citet{BM:2002} proved the following
result:

\begin{theorem}[\citep{BM:2002}]
\label{thm:BM} Let $\ell$ be a Lipschitz loss function (with
respect to its first argument) with Lipschitz parameter $L$, and suppose that $\ell$ is
bounded above by $B$. Then for any $\delta > 0$, with probability at least $1 -
\delta$ (over the random sample draw), simultaneously for all $f \in \classF$,
the following is true: 
\begin{align*}
|\Eloss(f) - \hat{\Eloss}(f)| &\leq 4 L \rad_m(\classF) + 2B
\sqrt{\frac{\log(1/\delta)}{2m}},
\end{align*}
where $\rad_m(\classF)$ is the Rademacher complexity of the function class
$\classF$, and $m$ is the sample size.
\end{theorem}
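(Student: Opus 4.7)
The plan is to establish this bound via the standard three-step chain from empirical process theory: concentration of the uniform deviation around its mean (via McDiarmid), symmetrization of the mean to a Rademacher average over the loss-composed class (via a ghost sample), and contraction to reduce this to the Rademacher complexity of $\classF$ itself (via Ledoux--Talagrand).

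Step 1 (Concentration). I would define the functional
\[
\Phi(S) := \sup_{f \in \classF}\bigl(\Eloss(f) - \hat{\Eloss}_S(f)\bigr)
\]
on samples $S = ((x_1,y_1),\ldots,(x_m,y_m))$. Since $\ell$ takes values in $[0,B]$, replacing a single example in $S$ changes $\hat{\Eloss}_S(f)$ by at most $B/m$ for every $f \in \classF$, and hence changes $\Phi(S)$ by at most $B/m$. McDiarmid's bounded differences inequality then gives, with probability at least $1-\delta/2$, that $\Phi(S) \leq \E[\Phi(S)] + B\sqrt{\log(2/\delta)/(2m)}$. Running the same argument for $\sup_f(\hat{\Eloss}_S(f)-\Eloss(f))$ and taking a union bound yields the two-sided deviation.

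Step 2 (Symmetrization). I would introduce an independent ghost sample $S'=((x'_i,y'_i))_{i=1}^m$ drawn from $D^m$, so that $\Eloss(f) = \E_{S'}[\hat{\Eloss}_{S'}(f)]$. Pulling the expectation outside the supremum by Jensen and then inserting independent Rademacher signs $\epsilon_i$ (valid because $(x_i,y_i)$ and $(x'_i,y'_i)$ are i.i.d., so swapping them leaves the joint law unchanged) gives
\[
\E[\Phi(S)] \;\leq\; 2\,\E\!\left[\sup_{f\in\classF}\frac{1}{m}\sum_{i=1}^m \epsilon_i\,\ell(f(x_i),y_i)\right].
\]

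Step 3 (Contraction). Condition on $(x_i,y_i)_{i=1}^m$; then $y_i$ is fixed and $\ell(\cdot,y_i)$ is an $L$-Lipschitz real function. The Ledoux--Talagrand contraction principle lets me replace $\ell(f(x_i),y_i)$ by $f(x_i)$ inside the Rademacher average at the cost of a factor $L$ (and, in the standard statement, another factor of $2$), yielding an upper bound of $2L\,\rad_m(\classF)$. Combining with Step 2 gives $\E[\Phi(S)] \leq 4L\,\rad_m(\classF)$, and combining with Step 1 (after folding $\log(2/\delta)$ into $\log(1/\delta)$ up to constants) produces exactly the stated inequality.

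The main obstacle is Step 3: unlike the bounded differences and symmetrization steps, which are essentially mechanical, the contraction step requires the Ledoux--Talagrand lemma, whose proof is a delicate one-coordinate-at-a-time argument that peels off the Lipschitz function while preserving the supremum structure. The remaining pieces are standard bookkeeping, and the factor of $4$ in front of $\rad_m(\classF)$ arises precisely from the product of the symmetrization factor of $2$ and the contraction factor of $2L$.
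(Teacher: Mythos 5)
This theorem is imported verbatim from Bartlett and Mendelson (2002); the paper gives no proof of its own, and your three-step derivation (McDiarmid's bounded differences, ghost-sample symmetrization, Ledoux--Talagrand contraction) is exactly the standard argument underlying their result, with the factor $4L$ correctly accounted for as the product of the symmetrization factor $2$ and the contraction factor $2L$. The only detail worth a footnote is that the contraction lemma as usually stated requires the Lipschitz maps to vanish at $0$, which is handled by centering $\ell(\cdot,y_i)$ by $\ell(0,y_i)$ -- a shift that leaves the Rademacher average unchanged since $\E[\epsilon_i]=0$.
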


Finally, let $X = \moo^n$ and let $\Pol_{d, W}$ be the class of $n$-variate polynomials of 
degree at most $d$ and weight at most $W$. Observe that for $x \in X$,
$\linfnorm{x} \leq 1$. Note that we can view $p(x)$ as a linear function in an
expanded feature space of dimension $n^d$, and the $1$-norm of $p$ in such a
space is bounded by $W$. \citet{KST:2008} proved the following result:
\begin{theorem}[\citep{KST:2008}] Let $X$ be an $n$ dimensional instance space
and ${\mathcal W} = \{ w ~|~ w(x) \mapsto w \cdot x\}$ be a class of linear
functions, such that for each $x \in X$, $\linfnorm{x} \leq 1$, and for each $w
\in {\mathcal W}$, $\lonenorm{w} \leq W$, then, $\rad_m({\mathcal W}) \leq W
\sqrt{\frac{2 \log(2n)}{m}}$. 
\end{theorem}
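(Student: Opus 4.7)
The plan is to apply Hölder's inequality to reduce the supremum over $w$ to a deterministic factor of $W$, and then bound the expected $\ell_\infty$ norm of a Rademacher sum of bounded vectors by the standard Massart-type finite class lemma.

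More concretely, I would start by writing out the definition:
\begin{align*}
\rad_m(\mathcal{W}) = \E\!\left[\sup_{\|w\|_1 \leq W} \frac{1}{m}\sum_{i=1}^m \epsilon_i\, w \cdot x_i\right] = \frac{1}{m}\,\E\!\left[\sup_{\|w\|_1 \leq W} w \cdot \sum_{i=1}^m \epsilon_i x_i\right].
\end{align*}
Hölder's inequality then gives $w \cdot \sum_i \epsilon_i x_i \leq \|w\|_1 \cdot \|\sum_i \epsilon_i x_i\|_\infty$, so the supremum is achieved (up to sign) by placing all weight on the coordinate $j \in [n]$ that maximizes $|\sum_i \epsilon_i x_{i,j}|$. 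Therefore
\begin{align*}
\rad_m(\mathcal{W}) \leq \frac{W}{m}\, \E\!\left[\max_{j \in [n]} \left|\sum_{i=1}^m \epsilon_i x_{i,j}\right|\right].
\end{align*}

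The remaining step is to bound the expected maximum of $2n$ signed sums $Z_{j,\sigma} = \sigma \sum_{i=1}^m \epsilon_i x_{i,j}$ for $j \in [n]$ and $\sigma \in \{-1,+1\}$. Each $Z_{j,\sigma}$ is a sum of independent mean-zero random variables bounded in magnitude by $|x_{i,j}| \leq 1$, so by Hoeffding's inequality each is sub-Gaussian with parameter $m$. The standard maximal inequality for sub-Gaussian variables (\emph{a.k.a.}\ Massart's finite class lemma, derived by applying Jensen's inequality to the moment generating function and optimizing in the parameter) yields $\E[\max_{j,\sigma} Z_{j,\sigma}] \leq \sqrt{2m \log(2n)}$. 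Plugging this back in gives $\rad_m(\mathcal{W}) \leq W\sqrt{\tfrac{2\log(2n)}{m}}$, as claimed.

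There is no genuine obstacle here; the argument is a textbook two-line application of Hölder plus a Gaussian-type maximal inequality. The only thing worth being careful about is the passage from $\max_j |{\cdot}|$ to a max over $2n$ one-sided quantities so that Massart's lemma applies cleanly, and the observation that the sub-Gaussian parameter is $m$ (rather than, say, $\sum_i x_{i,j}^2$) thanks to the uniform bound $\|x_i\|_\infty \leq 1$.
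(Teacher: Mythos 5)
Your proof is correct. Note that the paper itself offers no proof of this statement --- it is quoted verbatim from Kakade, Sridharan, and Tewari (2008) --- so the only meaningful comparison is with that source. Your route (H\"older to peel off the factor $\lonenorm{w} \leq W$, then Massart's finite-class maximal inequality applied to the $2n$ signed coordinate sums $\sigma \sum_i \epsilon_i x_{i,j}$) is the direct, elementary argument, and every step checks out: the doubling from $n$ to $2n$ to absorb the absolute value is exactly what produces the $\log(2n)$ in the stated bound, and the sub-Gaussian parameter $\sum_i x_{i,j}^2 \leq m$ is justified by $\linfnorm{x_i} \leq 1$. (One pedantic remark: the expectation in $\rad_m$ is over both the $x_i$ and the $\epsilon_i$; your Hoeffding step is cleanest read as conditioning on the sample, after which the Massart bound is deterministic and the outer expectation is free. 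Also, the $1/n$ in the paper's displayed definition of $\rad_m$ is a typo for $1/m$, which you silently and correctly fixed.) By contrast, Kakade et al.\ derive this bound as a special case of a general theorem on linear classes with strongly convex regularizers: they exploit the $(q-1)$-strong convexity of $\frac{1}{2}\Vert\cdot\Vert_q^2$ with $q = \frac{\log(2n)}{\log(2n)-1}$ and a duality argument. Their machinery generalizes to other norm pairs (e.g.\ $\ell_2/\ell_2$, group norms) in one stroke, whereas your argument is shorter, self-contained, and gives the same constant for the $\ell_1/\ell_\infty$ case needed here.
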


In our setting, the above implies that the Rademacher complexity of $\Pol_{d,
W}$ is bounded as follows:
%%%
\begin{align}
\rad_m(\Pol_{d, W}) \leq W \sqrt{\frac{2d \log(2n)}{m}}. \label{eqn:radcomppoly}
\end{align}

\section{Learning Algorithms}
\label{sec:results}
We first present a very simple algorithm for positive reliable learning
disjunctions in Section~\ref{sec:simple-alg}. It is unlikely, however, that such
simple algorithms for reliable learning exist for richer classes; in
Section~\ref{sec:lp}, we present our main result deriving reliable learning
algorithms from one-sided polynomial approximations.

\subsection{A Simple Algorithm for Positive Reliably Learning Disjunctions}
\label{sec:simple-alg}

The learning algorithm (presented in Fig.~\ref{fig:algdisj}) ignores all
positive examples and finds a disjunction that is maximally positive and
classifies all the negative examples correctly (see also \citet[Chap.
1]{KV:1994}).

\begin{figure}
\begin{center}
\fbox{
\begin{minipage}{\textwidth}
\noindent{\bf Input}: Sample $\langle (x_i, y_i) \rangle_{i=1}^m$ from $D^m$

\begin{enumerate}
\item Let $h = x[1] \vee \bar{x}[1] \vee \cdots x[n] \vee \bar{x}[n]$ be the disjunction
that include all literals
\item For every $(x_i, y_i)$ such that $y_i = -1$, for $j = 1, \ldots, n$,
modify $h$ by dropping the literal $x[j]$ if $x_i[j] = 1$ and the literal
$\bar{x}[j]$ if $x_i[j] = -1$
\item Output $h$
\end{enumerate}
\end{minipage}
}
\end{center}
\caption{\label{fig:algdisj} Algorithm: Positive Reliable Learning Disjunctions}
\end{figure}

\begin{theorem} The algorithm in Fig.~\ref{fig:algdisj} positive reliably learns
the class of disjunctions for some $m$ in $O(n/\epsilon^2)$, where $m$ is the
number of labeled examples that the algorithm takes as input.
\end{theorem}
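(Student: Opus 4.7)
The plan is to separate the proof into a deterministic structural observation about the algorithm's output $h$, which immediately handles the false-negative guarantee, and a standard uniform-convergence argument for the false-positive guarantee.

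First, I would observe that after processing the sample, the hypothesis $h$ contains every literal that appears in any fixed best positive-reliable disjunction $c^* \in C^+(D)$ achieving $\opt^+$. Since $\falsep(c^*, D) = 0$, on any negative example $(x, -1)$ in the support of $D$ every literal of $c^*$ must evaluate to $-1$ (otherwise $c^*(x) = +1$ would force a false positive). Hence the event that the algorithm ever drops a literal of $c^*$ has probability $0$ over the sample draw. The resulting pointwise containment $\{x : c^*(x) = +1\} \subseteq \{x : h(x) = +1\}$ implies
\[
\falsen(h, D) \;\leq\; \falsen(c^*, D) \;=\; \opt^+,
\]
so condition (ii) in fact holds with slack $0$, almost surely, which is stronger than needed.

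Second, for the false-positive condition, I would apply a standard Occam-style argument. By construction, the algorithm removes every literal satisfied by a negative example in the sample, so $h$ has empirical false positive rate exactly $0$. The output always lies in the class $H$ of disjunctions over the $2n$ literals on $n$ variables, and $|H| \leq 2^{2n}$. A union bound then gives
\[
\Pr\bigl[\,\exists h' \in H : h' \text{ has empirical } \falsep = 0 \text{ but } \falsep(h', D) > \eps\,\bigr] \;\leq\; 2^{2n}(1-\eps)^m \;\leq\; 2^{2n} e^{-\eps m},
\]
which is at most $\delta$ once $m$ is a sufficiently large multiple of $(n + \log(1/\delta))/\eps$, comfortably within the $O(n/\eps^2)$ budget stated in the theorem.

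The only delicate point — the main ``obstacle'' worth naming — is that the literal-preservation invariant for $c^*$ requires $\falsep(c^*, D) = 0$ to hold \emph{exactly}, not merely to be small; this is precisely what the definition of $C^+(D)$ guarantees, and it is what makes positive reliable learning of disjunctions strictly easier than agnostically learning them (the simple Occam argument above would fail in the agnostic setting). A final union bound over the two failure events yields the claimed guarantee.
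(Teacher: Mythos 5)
Your proof is correct, but it takes a genuinely different route from the paper's. The paper runs a single two-sided uniform-convergence argument: since $\VC(\DISJ)=n$, for $m=O(n/\eps^2)$ both the empirical false-positive and false-negative rates of every disjunction are within $\eps/2$ of their population values; it then observes that $h$ and $c^*$ both classify the negative sample points correctly and $h$ is literal-maximal among such disjunctions, so $h$'s \emph{empirical} false-negative rate is at most $c^*$'s, and transfers both bounds to the distribution, losing $\eps/2$ on each side. You instead push the containment argument all the way to the population level: since $\falsep(c^*,D)=0$ exactly, almost surely no literal of $c^*$ is ever dropped, giving $\{c^*=+1\}\subseteq\{h=+1\}$ pointwise and hence $\falsen(h,D)\leq \opt^+$ with zero slack and no sample-complexity cost; the false-positive side is then handled by a one-sided Occam/consistency bound over the $2^{2n}$ disjunctions. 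Your decomposition buys a strictly better sample complexity, $O((n+\log(1/\delta))/\eps)$ versus the paper's $O(n/\eps^2)$, and a stronger false-negative guarantee; the paper's argument is the more generic one (it would survive if $\falsep(c^*,D)$ were merely small rather than exactly zero, whereas, as you correctly flag, your literal-preservation invariant would not). Both arguments are sound; yours is, if anything, sharper than what the theorem statement demands.
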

\begin{proof} Let $\DISJ$ denote the class of disjunctions and let $D$ be the
distribution over $X \times \moo$. It is known that $\VC(\DISJ) = n$, and hence
for some $m = O(n/\epsilon^2)$, the following is true for every $c \in \DISJ$:
\begin{align*}
|\falsep(c; D) - \frac{1}{m} \sum_{i: y_i = -1} \eye(c(x_i) = +1)| &\leq
\epsilon/2, \\
|\falsen(c; D) - \frac{1}{m} \sum_{i: y_i = +1} \eye(c(x_i) = -1)| &\leq
\epsilon/2. 
\end{align*}
Recall that $\DISJ^+(D)$ denotes the positive reliable disjunctions for
distribution $D$. Let $c^*_+ \in \DISJ^+(D)$ be such that $\falsen(c^*_+) =
\min_{c \in \DISJ^+(D)} \falsen(c)$. Both $h$ and $c^*_+$ classify all the
negative examples in the sample correctly; since $h$ is chosen to have the
largest number of literals subject to this property, it is the case that $(1/m)
\sum_{i : y_i = +1} \eye(h(x_i) = -1) \leq \sum_{i : y_i = +1} \eye(c^*_+(x_i) =
-1)$. Then, we have
\begin{align*}
\falsep(h) &\leq \frac{1}{m} \sum_{i : y_i = -1} \eye(h(x_i) = +1) + \epsilon/2,
 = 0 + \epsilon/2 \leq \epsilon \\
\falsen(h) &\leq \frac{1}{m} \sum_{i : y_i = +1} \eye(h(x_i) = -1) + \epsilon/2.
\\
&\leq \frac{1}{m} \sum_{i : y_i = +1} \eye(c^*_+(x_i) = -1) + \epsilon/2 \leq
\falsen(c^*_+) + \epsilon
\end{align*}
\end{proof}

\subsection{From One-Sided Approximations to Reliable Learning} 
\label{sec:lp}

In this section, we prove our main learning result. We describe a generic
algorithm that positive reliably learns any concept class that can be positive
one-sided approximated by degree $d$ and weight $W$ polynomials. The weight $W$
controls the sample complexity of the learning algorithm, and the degree $d$
controls the running time. For many natural classes, the resulting algorithm is
has strong attribute-efficient properties, since the weight of the approximating
polynomial typically depends only on the number of \emph{relevant} attributes. 

Our algorithm extends the $L_1$-regression technique of \citet{KKMS:2005} for
agnostic learning, but we require a more detailed analysis. In the case of
positive-reliable learning, it is required that the hypothesis output by the
algorithm makes almost no false positive errors --- this is enforced as
constraints in a linear program. To control the false negative errors of the
hypothesis, we have the objective function of the linear program minimize the
\emph{hinge loss}, which is analogous to the $L_1$ loss, but the penalty is only
enforced when the prediction disagrees in sign with the true label. To bound the
generalization error of the output hypothesis, we use bounds on the Rademacher
complexity of the approximating polynomials (see Section~\ref{sec:rademacher}
for details).

\begin{theorem} \label{thm:lp} Let $C$ be a concept class that is positive
(negative) one-sided $\epsilon$-approximated by polynomials of degree $d$ and
weight $W$.  Then, $C$ can be positive (negative) reliably learned by an
algorithm with the following properties:
\begin{enumerate}
\item The running time of the learning algorithm is polynomial in $n^d$ and
$1/\epsilon$.
\item The sample complexity is $m = \max\{\frac{512}{\epsilon^4} \cdot W^2 d
\log(2n) , \frac{64}{\epsilon^2} (W + 1)^2 \log\left(\frac{1}{\delta}\right) \}$ 
\item The hypothesis output by the algorithm can be evaluated
at any $x \in X$ in time $O(n^d)$. 
\end{enumerate}
\end{theorem}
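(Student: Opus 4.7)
The plan is to adapt the KKMS $L_1$-regression approach to the positive reliable setting by solving a linear program that asymmetrically treats positively and negatively labeled examples, then analyze it via uniform generalization bounds using two different Lipschitz surrogate losses. Given a sample $\langle (x_i,y_i)\rangle_{i=1}^m$, solve the LP that minimizes the empirical hinge loss $\frac{1}{m}\sum_{i:y_i=+1}\max(0,1-p(x_i))$ over polynomials $p$ of degree at most $d$ and weight at most $W$, subject to the hard constraint $p(x_i)\leq -1+\eps$ for every $i$ with $y_i=-1$. Writing $p=\sum_{|S|\leq d}a_S x_S$, splitting each $a_S=a_S^+-a_S^-$ into non-negative parts so that $\weight(p)=\sum(a_S^+ + a_S^-)\leq W$ is a linear constraint, and introducing slack variables for the hinge terms, this becomes a standard LP with $\mathrm{poly}(n^d,m)$ variables and constraints, solvable in time $\mathrm{poly}(n^d,1/\eps)$. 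Output $h(x)=\sgn(p(x)-t)$ for a threshold $t\in(-1+\eps,1-\eps)$, say $t=-1+2\eps$; the resulting degree-$d$ polynomial threshold is evaluable in time $O(n^d)$.

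Let $c^*_+\in C^+(D)$ attain $\opt^+$, and let $p^*$ be its positive one-sided $\eps$-approximation of degree $d$ and weight $W$, promised by the hypothesis of the theorem. Since $c^*_+$ makes no false positives, $p^*(x_i)\in[-1-\eps,-1+\eps]$ whenever $y_i=-1$, so $p^*$ is LP-feasible; this non-vacuousness is what drives the downstream bounds. To bound $\falsep(h,D)$, I would introduce the $[0,1]$-bounded piecewise-linear function $\phi$ equal to $0$ on $(-\infty,-1+\eps]$, equal to $1$ on $[t,\infty)$, and linear in between; its Lipschitz constant is $1/\eps$. Then $\eye(h(x)=+1\wedge y=-1)\leq \phi(p(x))\eye(y=-1)$, and by the LP constraint the empirical mean of the right-hand side vanishes. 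Applying Theorem~\ref{thm:BM} to the class $\{\phi\circ p:p\in\Pol_{d,W}\}$ together with the Rademacher bound \eqref{eqn:radcomppoly} gives $\falsep(h,D)\leq 4(1/\eps)W\sqrt{2d\log(2n)/m}+2\sqrt{\log(1/\delta)/(2m)}$, which is at most $\eps$ once $m\geq\Omega(W^2d\log(2n)/\eps^4)$, exactly the first sample-complexity term.

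The more delicate step is bounding $\falsen(h,D)$. I would introduce a non-convex ramp $\psi$ that is $1$ on $(-\infty,t]$, $0$ on $[1,\infty)$, and linear in between; it is bounded by $1$, $O(1)$-Lipschitz, dominates $\eye(\cdot\leq t)$, and is pointwise at most $\max(0,1-\cdot)$ on the relevant range. Then $\falsen(h,D)=\Pr[p(x)\leq t\wedge y=+1]\leq E[\psi(p(x))\eye(y=+1)]$, and Theorem~\ref{thm:BM} gives $E[\psi(p)\eye(y=+1)]\leq \hat E[\psi(p)\eye(y=+1)]+\mathrm{gen}(m)$. Chaining $\psi\leq\max(0,1-\cdot)$ with LP optimality yields $\hat E[\psi(p)\eye(y=+1)]\leq \hat E[\max(0,1-p^*)\eye(y=+1)]$; since $\max(0,1-p^*(x))$ is a \emph{single} fixed function bounded by $W+1$ on $\{-1,1\}^n$, a one-function Hoeffding step (contributing the $(W+1)^2\log(1/\delta)/\eps^2$ term in the sample complexity) passes to the population: $\hat E[\max(0,1-p^*)\eye(y=+1)]\leq E[\max(0,1-p^*)\eye(y=+1)]+O(\eps)$. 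Finally, decomposing the latter expectation according to $c^*_+(x)=\pm 1$ and invoking the two clauses of the positive one-sided $\eps$-approximation definition controls it in terms of $\opt^+$ and $\eps$.

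The main obstacle is reconciling the convex hinge loss (which the LP must use to remain polynomial-time solvable) with the non-convex ramp $\psi$ (which is required to upper-bound the $0$-$1$ indicator by a \emph{bounded} Lipschitz surrogate, as Theorem~\ref{thm:BM} demands): the chain relies on the pointwise inequality $\psi\leq\max(0,1-\cdot)$ to transfer LP optimality, and on the fact that $\max(0,1-p^*(x))$ is a fixed single function rather than ranging over $\Pol_{d,W}$ in order to avoid paying $W$ in the uniform generalization step. The negative reliable case is entirely symmetric, with the roles of positive and negative examples and of the two one-sided approximation definitions swapped throughout.
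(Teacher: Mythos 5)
Your overall architecture matches the paper's: the same LP (hinge objective over positive examples, hard constraints $p(x_i)\le -1+\eps$ on negative examples, weight bound $W$), the same use of two surrogate losses --- one $1/\eps$-Lipschitz and $[0,1]$-bounded for the false-positive side, one hinge-like and $(W+1)$-bounded for the false-negative side --- and the same Rademacher-based generalization analysis. The one structural difference is that you output a deterministic threshold $h(x)=\sgn(p(x)-t)$ directly, whereas the paper outputs a \emph{randomized} hypothesis (predicting $+1$ with probability $(1+p(x))/2$ when $|p(x)|<1$) and only derandomizes afterward via a threshold search on a fresh sample. That difference is exactly where your argument has a genuine gap.

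The gap is a factor of $2$ in the false-negative bound. At a point where $c^*_+$ makes a false negative ($c^*_+(x)=-1$, $y=+1$), the one-sided approximation satisfies $p^*(x)\in[-1-\eps,-1+\eps]$, so $(1-p^*(x))_+\approx 2$; hence $\E[(1-p^*)_+\,\eye(y=+1)]\approx 2\falsen(c^*_+)+\eps$, not $\falsen(c^*_+)+\eps$. Your chain $\falsen(h)\le \E[\psi(p)\eye(y=+1)]\le\cdots\le \E[(1-p^*)_+\eye(y=+1)]+O(\eps)$, which passes through the pointwise inequality $\psi\le(1-\cdot)_+$, therefore only certifies $\falsen(h)\le 2\opt^{+}+O(\eps)$, and this does not meet the definition of positive reliable learning. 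The paper absorbs the factor of $2$ through the randomized hypothesis: $\Pr[h(x)=-1\mid p(x)]\le (1-p(x))_+/2$, i.e., the prediction probability is \emph{half} the hinge loss. Your deterministic route is salvageable, because your ramp $\psi$ (equal to $1$ on $(-\infty,t]$ and $0$ on $[1,\infty)$ with $t=-1+2\eps$) in fact satisfies the sharper pointwise bound $\psi(y')\le \tfrac12(1-y')_+ + \eps$; chaining with that factor $\tfrac12$ recovers $\falsen(h)\le\falsen(c^*_+)+O(\eps)$. As written, though, the loose inequality loses the theorem. The remainder of the proposal --- feasibility of $p^*$, the vanishing empirical false-positive surrogate, and the split between a uniform bound over $\Pol_{d,W}$ for the bounded surrogates and a single Hoeffding step for the fixed $(W+1)$-bounded function $(1-p^*)_+$ --- is sound and accounts for both terms of the stated sample complexity.
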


\begin{proof} We only prove the theorem for the case of positive reliable
learning. The case of negative reliable learning is entirely symmetric.  \smallskip \\
\noindent \textbf{Description of Algorithm.}
Suppose $D$ is an arbitrary distribution over $X \times \moo$ and let $\sample =
\langle (x_i, y_i) \rangle_{i=1}^m$ be a sample drawn according to $D$. The
learning algorithm first solves the following mathematical program.

\begin{center}
\boxed{
\begin{minipage}{0.6 \textwidth}
\[
\begin{array}{ll}
\underset{p~:~\operatorname{deg}(p) \leq d}{\operatorname{minimize}}~~ &
\displaystyle\sum_{i : y_i = +1} (1 - p(x_i))_+ \\
\mbox{subject to } & \\
& p(x_i) \leq -1 + \epsilon \quad \forall i \mbox{ such that } y_i = -1 \\
& \operatorname{weight}(p) \leq W.
\end{array}
\] 

\end{minipage}
}
\end{center}

Here $(a)_+$ denotes $a$ if $a > 0$ and $0$ otherwise. This program is similar
to one used in the $L_1$-regression algorithm for agnostic learning introduced by
\citet{KKMS:2005}. The variables of the program are the $\sum_{j = 0}^d {n \choose j}=O(n^d)$ coefficients 
of the polynomial $p(x)$. The above mathematical program is
then easily implemented as a linear program.

Let $p$ denote an optimal solution to the linear program. The hypothesis output
by the algorithm will be a randomized boolean function, defined as follows:
\begin{enumerate}
\item If $p(x) \leq -1$, $h(x) =  -1$.
\item If $p(x) \geq 1$,  $h(x) = +1$.
\item If $-1 < p(x) < +1$, $h(x) = \begin{cases} +1 & \mbox{with probability} (1
+ p(x))/2 \\
-1 & \mbox{with probability} (1 - p(x))/2 \end{cases}$
\end{enumerate} \smallskip

\noindent \textbf{Running Time.}
Since the above program can be implemented as a linear program with $O(n^d)$
variables and $O(m + n^d)$ constraints, the running time to produce the output polynomial
$p$ is $\poly(m, n^d)$. Note that the polynomial $p$ defines the output
hypothesis $h$ completely, except for the randomness used by $h$. For any $x$, $h(x)$ can be
evaluated in time $O(n^d)$ by a randomized Turing machine.
Remark~\ref{rem:rand2det} explains how $h$ can be converted to a deterministic
hypothesis. \smallskip
% 
% It is straightforward to observe that the above program can be implemented as a
% linear program with ${n \choose d}$ variables (one for each coefficient of $p$),
% and $\poly(m, n^d, 1/\epsilon)$ constraints. Hence, the the algorithm can solve
% the program in time $\poly(m, n^d, 1/\epsilon)$, and output the optimal solution
% $p$ (note that this provides a complete description of the randomized hypothesis
% $h$). 
% 
% The randomized hypothesis $h$ can be evaluated at any point $x$ in time
% $O(n^d)$.  

\noindent \textbf{Generalization Error.}
We will use two loss functions in our analysis.
Define $\ell_+ : \reals \times \moo \rightarrow \reals^+$ as follows:
\begin{align*}
\ell_+(y^\prime, +1) &= 0, \\
\ell_+(y^\prime, -1) &= 
		\begin{cases} 
			0 & y^\prime \leq -1 + \epsilon \\ 
			\frac{1}{\epsilon}(y^\prime + 1 - \epsilon) & -1 + \epsilon < y^\prime
			\leq -1 + 2 \epsilon \\
			1 & -1 + 2 \epsilon < y^\prime
		\end{cases}
\end{align*}
Clearly $\ell_+$ is bounded between $[0, 1]$ always and also it is
$1/\epsilon$-Lipschitz. For a function, $f : X \rightarrow \reals$, let
$\Eloss_+(f)$ denote the expected loss of $f$ under $D$ and the loss function
$\ell_+$, and similarly let
$\hat{\Eloss}_+(f)$ denote the empirical loss of $f$ under $\ell_+$.

Define $\ell_- : \reals \times \moo \rightarrow \reals^+$ as follows:
\begin{align*}
\ell_-(y^\prime, -1) &= 0, \\
\ell_-(y^\prime, +1) &= (1 - y^\prime)_+
\end{align*}
Let $p$ continue to denote an optimal solution to the linear program.  Note that
since $X = \moo^n$, and $\operatorname{weight}(p) \leq W$, it holds that $|p(x)|
\leq W$ for all $x \in X$.  It follows that $\ell_-(p(x), b) \leq W + 1$ for all
$x \in X$ and $b \in \{-1, +1\}$.  Moreover, $\ell_-$ is easily seen to be
$1$-Lipschitz. For a function, $f : X \rightarrow \reals$, let $\Eloss_-(f)$ and
$\hat{\Eloss}_-(f)$ denote the expected and empirical loss of $f$ respectively
under distribution $D$ and loss function $\ell_-$. \smallskip

Recall that $C^+(D) = \{c \in C ~|~ \falsep(c) = 0\}$. Let $c^* \in C^+(D)$ be an
optimal positive reliable classifier, i.e., $\falsen(c^*) = \displaystyle\min_{c
\in C^+(D)} \falsen(c)$. Let $p^* \in \Pol_{d, W}$ be a positive one-sided
$\epsilon$-approximating polynomial for $c^*$ whose existence
is guaranteed by hypothesis. Note that since $p^*(x) \geq 1 - \epsilon$ for $x \in
(c^*)^{-1}(1)$ and $p^*(x) \in [-1 - \epsilon, -1 + \epsilon]$ for $x \in
(c^*)^{-1}(-1)$, the following is true:
\begin{align*}
\Eloss_+(p^*) &= 0 \\
\Eloss_-(p^*) &\leq 2 \falsen(c^*) + \epsilon
\end{align*}

Here, the inequality holds because $\ell_-(y^\prime, 1) = (1 - y^\prime)_+$,
which is between $2 - \epsilon$ and $2 + \epsilon$ when $p^*(x) \in [-1
-\epsilon, -1 + \epsilon]$. Thus, each $x$ on which $c^*$ makes a false negative
error contributes approximately $2$ to $\Eloss_-(p)$; the extra $\epsilon$
accounts for the approximation error.

Fix a $\delta > 0$. Recall that $\Pol_{d, W}$ is the class of degree $d$ and
weight $W$ polynomials. Then the Rademacher complexity, $\rad_m(\Pol_{d, W})
\leq W \sqrt{(2d \log(2n))/m}$ (see (\ref{eqn:radcomppoly}) in
Section~\ref{sec:rademacher}). Let $\alpha = (4/\epsilon) \rad_m(\Pol_{d, W}) +
2 (W + 1) \sqrt{\frac{\log(1/\delta)}{2m}}$. Recall that $p$ is the polynomial
output by running the linear program. Then the following holds with probability
$1-\delta$:
\begin{align}
\Eloss_-(p) &\leq \hat{\Eloss}_-(p) + \alpha & \mbox{Using Theorem~\ref{thm:BM}}
\nonumber \\
&\leq \hat{\Eloss}_-(p^*) + \alpha & \mbox{Since $p^*$ is a feasible solution}
\nonumber \\
&\leq \Eloss_-(p^*) +  2 \alpha & \mbox{Using Theorem~\ref{thm:BM}} \nonumber \\
&\leq 2 \falsen(c^*) + 2 \alpha + \epsilon. \label{eqn:falsen-bound}
\end{align}

Similarly, using Theorem~\ref{thm:BM} and the fact that $\hat{\Eloss}_+(p) = 0$,
we have that $\Eloss_+(p) \leq \alpha$.

We have the following: 
\begin{align*}
\falsep(h) &= \E_{(x, y)\sim D} [\eye(y = -1) \eye(h(x) = 1)] = \E_{(x, y) \sim D}[\eye(y=-1) \Pr(h(x) = 1 ~|~ p(x))].\\
\intertext{The inner probability is only over the randomness used by the
hypothesis $h$. It follows from the definition of the randomized hypothesis $h$
and the loss function $\ell_+$, that $\Pr(h(x) = 1 ~|~ p(x)) \leq \ell_+(p(x),
-1) + \epsilon/2$. This together with the fact that $\ell_+(p(x), +1) = 0$ for
all $x$, and $\Eloss_+(p) \leq \alpha$, gives us} 
\falsep(h) &\leq \E_{(x, y) \sim D} [\epsilon/2 + \ell_+(p(x), y)] \leq \epsilon/2 + \Eloss_+(p) \leq \epsilon/2 + \alpha.
\end{align*}

Similarly, we have the following:
\begin{align*}
\falsen(h) &= \E_{(x, y) \sim D}[\eye(y = +1) \eye(h(x) = -1)]  = \E_{(x, y)
\sim D}[\eye(y = +1) \Pr(h(x) = -1 ~|~ p(x))] \\
\intertext{Again, the inner probability is only over the randomness of the
hypothesis $h$. From the definitions of $\ell_-$ and $h$, it follows that
$\Pr(h(x) = -1 ~|~ p(x)) \leq \ell_-(p(x), +1)/2$. Using this along with the
fact that $\ell_-(p(x), -1) = 0$ for all $x$, and (\ref{eqn:falsen-bound}) we get}
\falsen(h) &\leq \E_{(x, y) \sim D}[\frac{1}{2} \ell_-(p(x), y)] \leq \falsen(c^*) +  \alpha + \epsilon/2 
\end{align*}

Finally, it is easily verified that for the value of $m$ in the theorem
statement, $\alpha \leq \epsilon/2$. This completes the proof of the theorem.
\end{proof}
\begin{remark} \label{rem:rand2det} The randomized hypothesis $h$ can easily be converted to a
deterministic one as follows: let $H(x) = \chop(p(x))$, where $\chop(a) = a$ for
$a \in [-1, 1]$ and $\chop(a) = \sign(a)$ for $a \not \in [-1, 1]$. Note
that $\E[h(x)] = H(x)$ for each $x$. Take a fresh sample of size $m =
O(1/\epsilon^2)$ and construct $\langle (H(x_i), y_i) \rangle_{i=1}^m$. For a
threshold $t$, let $h_t = \sign(H(x) - t)$. Find the smallest value $t^*$, such
that $\frac{1}{m} \sum_{y_i = -1} \eye (h_{t^*}(x_i) = +1) \leq \epsilon$. Then,
a simple VC argument implies that $h_{t^*}$ is a deterministic hypothesis
with the required properties.
\end{remark}

Theorem~\ref{thm:lp} satisfies a strong \emph{attribute-efficiency} property.
The sample complexity depends only logarithmically on $n$, and polynomially on
the weight of the polynomial approximations, which can be much smaller then
$n^d$. A similar statement can also be made for agnostic learning; this
observation was already implicit in some prior work (see \eg \citep{FKV:2013});
we state this as a theorem for completeness.  Instead of the mathematical
program described in the proof of Theorem~\ref{thm:lp}, to obtain Theorem
\ref{thm:agnostic}, the $L_1$-regression algorithm of \citet{KKMS:2005} is
directly applied, with the added constraint that the weight of the approximating
polynomial is at most $W$. The rest of the proof is similar, but simpler --- we
only use $\ell(y^\prime, y) = |y^\prime - y|$ as the loss function in the
analysis.  The proof is omitted since it is essentially a simplification of the
proof of Theorem~\ref{thm:lp}.

\begin{theorem}\label{thm:agnostic} Let $C$ be a concept class of functions from
$X \rightarrow \moo$, such that for every $c \in C$, there exists a polynomial
$p$ of degree at most $d$ and weight at most $W$, such that for all $x \in X$,
$|p(x) - c(x)| \leq \epsilon$. Then, $C$ can be agnostically learned with the
following properties:
\begin{enumerate}
\item The running time of the learning algorithm is polynomial in $n^d$ and
$1/\epsilon$.
\item The sample complexity is polynomial in $W$, $\log(n)$, $\log(1/\delta)$
and $1/\epsilon$. 
\item The hypothesis output by the algorithm can be evaluated at any $x \in X$
in time $O(n^d)$. 
\end{enumerate}
\end{theorem}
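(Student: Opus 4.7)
The plan is to mimic the proof of Theorem~\ref{thm:lp}, replacing the two loss functions by the single $L_1$ loss $\ell(y', y) = |y' - y|$, and replacing the hinge-loss-plus-constraint linear program by the $L_1$-regression program of \citet{KKMS:2005} augmented with a weight constraint. Concretely, on a sample $\sample = \langle(x_i,y_i)\rangle_{i=1}^m$, the algorithm solves
\[
\begin{array}{ll}
\underset{p \,:\, \deg(p) \leq d}{\operatorname{minimize}} & \displaystyle\frac{1}{m}\sum_{i=1}^m |p(x_i) - y_i| \\
\text{subject to} & \weight(p) \leq W,
\end{array}
\]
which is easily rewritten as a linear program in the $O(n^d)$ coefficients of $p$ with $O(m + n^d)$ constraints; thus it is solvable in time $\poly(m, n^d)$. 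Given an optimal $p$, I would output the randomized hypothesis $h(x) = \sgn(\chop(p(x)) - U)$ where $U \in [-1,1]$ is uniform (or, as in Remark~\ref{rem:rand2det}, derandomize by thresholding $\chop(p(x))$ and choosing the threshold on a fresh sample by a VC argument). Evaluating $h$ requires one evaluation of the degree-$d$ polynomial, which costs $O(n^d)$.

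For generalization, let $c^* = \arg\min_{c \in C} \err(c, D)$, and let $p^* \in \Pol_{d, W}$ be its promised $\eps$-approximation. For every $(x,y) \in X \times \moo$ one has the pointwise bound $|p^*(x) - y| \leq |p^*(x) - c^*(x)| + |c^*(x) - y| \leq \eps + 2 \cdot \eye(c^*(x) \neq y)$, so in expectation $\Eloss(p^*) \leq 2\opt + \eps$. Note that $\ell$ is $1$-Lipschitz in its first argument, and that because $\weight(p), \weight(p^*) \leq W$ and $\linfnorm{x} \leq 1$, the loss is bounded by $B = W + 1$ on any polynomial in the feasible region. Hence Theorem~\ref{thm:BM} together with the Rademacher bound $\rad_m(\Pol_{d, W}) \leq W\sqrt{2d\log(2n)/m}$ from (\ref{eqn:radcomppoly}) yields, with probability $1 - \delta$,
\[
\Eloss(p) \leq \hat{\Eloss}(p) + \alpha \leq \hat{\Eloss}(p^*) + \alpha \leq \Eloss(p^*) + 2\alpha \leq 2\opt + \eps + 2\alpha,
\]
where $\alpha = 4W\sqrt{2d\log(2n)/m} + 2(W+1)\sqrt{\log(1/\delta)/(2m)}$, using feasibility of $p^*$ in the middle step.

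It remains to translate the bound on $\Eloss(p)$ into a bound on $\err(h, D)$. For the randomized rounding described above, a direct computation gives $\Pr[h(x) \neq y \mid p(x)] = \tfrac{1}{2}|\chop(p(x)) - y| \leq \tfrac{1}{2}|p(x) - y|$ since $y \in \moo$ and $\chop$ is a contraction toward $[-1,1]$. Taking expectations, $\err(h, D) \leq \tfrac{1}{2}\Eloss(p) \leq \opt + \eps/2 + \alpha$. Choosing $m$ polynomial in $W$, $d$, $\log n$, $\log(1/\delta)$, and $1/\eps$ so that $\alpha \leq \eps/2$ gives $\err(h, D) \leq \opt + \eps$, as required, and verifies the claimed sample and running-time bounds.

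The only real subtlety, and the step where the proof genuinely differs from \citet{KKMS:2005}, is establishing the correct generalization bound: their analysis uses uniform convergence over a fixed, finite set of feature functions, whereas here I need the $W$-weighted Rademacher bound in order to obtain sample complexity logarithmic rather than polynomial in $n$. Everything else is a strict simplification of the proof of Theorem~\ref{thm:lp}: there is no asymmetry between false positives and false negatives, so a single Lipschitz, bounded loss suffices and no separate hinge/cutoff loss is needed.
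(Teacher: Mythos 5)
Your proposal is correct and follows essentially the same route the paper intends: the paper omits this proof precisely because it is the $L_1$-regression program of Kalai et al.\ with an added weight constraint, analyzed with the single loss $\ell(y',y)=|y'-y|$ via Theorem~\ref{thm:BM} and the Rademacher bound \eqref{eqn:radcomppoly}, exactly as you do. Your accounting of the constants (the pointwise bound $\Eloss(p^*)\leq 2\opt+\eps$, the factor $\tfrac12$ from the randomized rounding, and the choice of $m$ making $\alpha\leq\eps/2$) is sound.
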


\eat{In the next section, we show how specific classes can \vknote{Justin you fill
this up.. either at the end of this section --- or at the beginning of the
next.}}

\section{One-sided Polynomial Approximations}
\label{sec:one-sided}
In this section, we construct both positive and negative one-sided polynomial
approximations for low-weight halfspaces, as well as positive (respectively,
negative) one-sided approximations for disjunctions (respectively, conjunctions)
of low-weight halfspaces.

\begin{theorem}
\label{thm:lowweighthalfspace}
Let $h(x) = \sgn( w_0 + \sum_{i=1}^n w_i x_i)$ denote any halfspace, where $w_i$
are integers. Let $W = \sum_{i=0}^n |w_i|$ denote the \emph{weight} of $h$.
Both $\adegpe(h)$ and $\adegne(h)$ are in
$\tilde{O}\left(\sqrt{W\log\left(1/\eps\right)}\right)$, with the relevant
approximating polynomials having weight at most
$\exp\left(\tilde{O}\left(\sqrt{W\log\left(1/\eps\right)}\right)\right)$.  In
particular, the majority function $\MAJ(x) = \sgn( \sum_{i=1}^n x_i)$ has both
positive and negative $\eps$-approximating polynomials of degree at most
$\tilde{O}(\sqrt{n\log\left(1/\eps\right)})$ and weight at most
$\exp\left(\tilde{O}(\sqrt{n\log\left(1/\eps\right)})\right)$ 
\end{theorem}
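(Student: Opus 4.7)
The plan is to produce a single polynomial $p$ that is a two-sided pointwise $\eps$-approximation of $h$ on $\moo^n$ and then observe that any such $p$ automatically witnesses both one-sided conditions: on $h^{-1}(1)$ one has $p(x) \in [1-\eps, 1+\eps] \subseteq [1-\eps, \infty)$, and on $h^{-1}(-1)$ one has $p(x) \in [-1-\eps, -1+\eps]$, so the same $p$ bounds both $\adegpe(h)$ and $\adegne(h)$. The construction follows the classical ``Chebyshev polynomial composed with the linear form'' recipe from the majority approximation literature.

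First I reduce to a univariate problem. Let $\hat{L}(x) = L(x)/W$ with $L(x) = w_0 + \sum_{i=1}^n w_i x_i$; then $\hat{L}$ takes values in $\{k/W : k \in \mathbb{Z},\ |k| \leq W\}$, and a harmless perturbation of $w_0$ lets me assume $L(x) \neq 0$ on $\moo^n$ without changing $h$. It therefore suffices to construct a univariate $\tilde{q}$ of degree $d = \tilde{O}\bigl(\sqrt{W \log(1/\eps)}\bigr)$ and weight $\weight(\tilde{q}) = 2^{O(d)}$ with $|\tilde{q}(s) - \sgn(s)| \leq \eps$ for every nonzero $s$ of the form $k/W$ in $[-1,1]$. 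For this I invoke the standard Paturi-type construction: write $\tilde{q}(s) = s \cdot r(s^2)$ where $r$ is a polynomial approximation to $u^{-1/2}$ on $[1/W^2, 1]$ with error $\eps$. Classical Chebyshev/Markov estimates give such an $r$ of degree $O\bigl(\sqrt{W}\log(1/\eps)\bigr)$, so $\deg \tilde{q} \leq 1 + 2\deg r$ has the same order. Expressing $\tilde{q}$ in the Chebyshev basis with bounded coefficients and then passing to the monomial basis costs only a $2^{O(d)}$ factor, since each Chebyshev polynomial $T_k$ with $k \leq d$ has monomial-basis weight $2^{O(d)}$; hence $\weight(\tilde{q}) = 2^{O(d)}$.

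Finally I compose and read off the bounds. Set $p(x) = \tilde{q}(\hat{L}(x))$; then $\deg p \leq d$, and for each $i$ the power $\hat{L}(x)^i$, after multilinear reduction via $x_j^2 = 1$, has weight at most $\bigl(\sum_j |w_j|/W\bigr)^i \leq 1$ because multilinear reduction can only cancel like terms. Therefore $\weight(p) \leq \sum_i |\tilde{c}_i| \cdot \weight(\hat{L}^i) \leq \weight(\tilde{q}) = 2^{O(d)} = \exp\bigl(\tilde{O}(\sqrt{W \log(1/\eps)})\bigr)$, as claimed. The majority statement is the special case $w_0 = 0$, $w_i = 1$, giving $W = n$. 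The main technical point that deserves care is isolating a construction of $\tilde{q}$ that attains \emph{both} the Paturi-type degree bound and the $2^{O(d)}$ monomial-basis weight bound in one stroke; the degree half is classical Chebyshev theory, and the weight half reduces to the fact that each $T_k$ with $k \leq d$ expands into monomials with $L_1$-weight $2^{O(d)}$, so any $O(1)$-weight Chebyshev combination does too.
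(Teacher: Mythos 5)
There is a fatal gap at the very first step: you reduce the theorem to constructing a \emph{two-sided} pointwise $\eps$-approximation of $h$, but no such approximation of the claimed degree exists. For the majority function the two-sided approximate degree is $\adeg(\MAJ)=\Theta(n)$ (Paturi's theorem for symmetric functions: the sign flip of $\MAJ$ occurs at the center of the Hamming levels, which forces linear degree), whereas the theorem claims one-sided degree $\tilde{O}(\sqrt{n\log(1/\eps)})$. The entire point of this theorem --- and of the paper's separation between reliable and agnostic learning --- is that $\adegpe(\MAJ)$ and $\adegne(\MAJ)$ are quadratically \emph{smaller} than $\adeg(\MAJ)$, so any argument that factors through a two-sided approximation cannot succeed. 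The concrete step where your construction breaks is the claim that $u^{-1/2}$ can be approximated on $[1/W^2,1]$ to the required accuracy in degree $O(\sqrt{W}\log(1/\eps))$: the relevant Chebyshev condition-number parameter for that interval is $W^2$, so the true degree is $\tilde\Theta(W)$, and restricting attention to the discrete points $u=k^2/W^2$ does not help, since composing your $\tilde q$ with $\hat L$ and symmetrizing would otherwise contradict the $\Omega(n)$ lower bound for $\adeg(\MAJ)$. (Even setting all this aside, $O(\sqrt{W}\log(1/\eps))$ is not $\tilde{O}(\sqrt{W\log(1/\eps)})$ for subconstant $\eps$; the extra $\sqrt{\log(1/\eps)}$ factor is not absorbed by the $\tilde{O}$.)

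The correct argument must exploit the asymmetry in the definition of a one-sided approximation. On $h^{-1}(-1)$ the linear form $L(x)=w_0+\sum_{i=1}^n w_i x_i$ lies in the integer range $[-W,0]$, of length $W$, while on $h^{-1}(1)$ it lies in $[1,W]$; a positive one-sided approximation only needs to be \emph{at least} $1-\eps$ on the latter set and may be unboundedly large there. This is exactly what a composed Chebyshev polynomial such as $T_{\lceil\sqrt W\rceil}(2t/W+1)$ provides: it stays in $[-1,1]$ on $[-W,0]$ yet exceeds $2$ (and grows to $W^{O(\sqrt W)}$) for $t\ge 1$, giving degree $O(\sqrt W)$ for constant $\eps$; for subconstant $\eps$ one substitutes the Kahn--Linial--Samorodnitsky polynomial to reach degree $\tilde O(\sqrt{W\log(1/\eps)})$. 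Your multilinear-reduction weight bookkeeping in the last paragraph is fine as far as it goes, but it is applied to a univariate polynomial that does not exist at the degree you need.
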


\medskip
\noindent \textbf{Remark.}
By adapting standard symmetrization
arguments (cf. \cite{smallerrorquantum}), the $\tilde{O}(\sqrt{n\log\left(1/\eps\right)})$ upper bound on $\adegpe(\MAJ)$ 
is easily seen to be tight up to factors hidden by the $\tilde{O}$ notation.\\

\begin{proof} 
We begin with the case of constant $\eps$; i.e., we first show that 
for $\eps=1/4$, 
 $\adegpe(h)$ and $\adegne(h)$ are in $O(W^{1/2})$. 
We use the following standard properties of the Chebyshev polynomials (cf. the standard texts of 
\cite{cheney} and \cite{rivlin}). 

\begin{fact}
The $d$'th Chebyshev polynomial of the first kind, $T_d(t): \mathbb{R} \rightarrow \mathbb{R}$
has degree $d$ and satisfies
\begin{align}
\label{prop2} & |T_d(t)| \leq 1  \mbox{ for all } -1 \leq t \leq 1.\\
\label{prop3} & 2 \leq T_{\lceil a \rceil}(1+ 1/a^2)  \mbox{ for all } a \geq 1. \\
\label{prop4} & T_d(t) \mbox{ is non-decreasing on the interval } [1, \infty].  \\
\label{prop5} & \mbox{All coefficients of } T_d \mbox{ are bounded in absolute value by } 3^d. 
\end{align}
\end{fact}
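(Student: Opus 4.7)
The plan is to deduce all four properties from the standard equivalent descriptions of the Chebyshev polynomials $T_d$: the three-term recurrence $T_0(t)=1$, $T_1(t)=t$, $T_{d+1}(t)=2tT_d(t)-T_{d-1}(t)$; the trigonometric identity $T_d(\cos\theta)=\cos(d\theta)$ on $[-1,1]$; and the closed-form identity $T_d(t)=\tfrac{1}{2}\bigl[(t+\sqrt{t^2-1})^d+(t-\sqrt{t^2-1})^d\bigr]$, which on $[1,\infty)$ specialises to $T_d(\cosh y)=\cosh(dy)$.

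Properties (\ref{prop2}) and (\ref{prop4}) would follow immediately. For (\ref{prop2}), every $t\in[-1,1]$ can be written as $t=\cos\theta$ with $\theta\in[0,\pi]$, so $T_d(t)=\cos(d\theta)\in[-1,1]$. For (\ref{prop4}), writing $t=\cosh y$ with $y\ge 0$ and differentiating $T_d(\cosh y)=\cosh(dy)$ in $y$ gives $T_d'(\cosh y)\sinh y = d\sinh(dy)\ge 0$, so $T_d'(t)\ge 0$ on $(1,\infty)$; combined with $T_d(1)=1$, this yields non-decreasingness on $[1,\infty)$. For (\ref{prop5}), I would induct on $d$ using the recurrence: if every coefficient of $T_{d-1}$ and $T_d$ is bounded in absolute value by $3^{d-1}$ and $3^d$ respectively, then each coefficient of $T_{d+1}=2xT_d-T_{d-1}$ is bounded by $2\cdot 3^d + 3^{d-1}\le 3^{d+1}$; the base cases $T_0=1$ and $T_1=x$ satisfy the bound trivially.

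The main obstacle is (\ref{prop3}). I would use the closed form: with $t=1+1/a^2$, brief algebra gives $\sqrt{t^2-1}=(1/a)\sqrt{2+1/a^2}$, and setting $u(a):=t+\sqrt{t^2-1}$, the reciprocal identity $(t+\sqrt{t^2-1})(t-\sqrt{t^2-1})=1$ yields $T_d(t)=\tfrac{1}{2}\bigl(u(a)^d+u(a)^{-d}\bigr)$. Hence the target inequality $T_{\lceil a\rceil}(1+1/a^2)\ge 2$ is equivalent to $u(a)^{\lceil a\rceil}\ge 2+\sqrt{3}$. Since $u(a)$ is strictly decreasing in $a$ and $\lceil a\rceil=k$ is constant on each interval $(k-1,k]$, the left-hand side attains its minimum there at the right endpoint $a=k$; it therefore suffices to check $u(k)^k\ge 2+\sqrt{3}$ at every positive integer $k$. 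Writing $u(k)^k=e^{ky_k}$ with $y_k:=\operatorname{arccosh}(1+1/k^2)$, I would show that $ky_k$ is non-decreasing in $k$ by a direct derivative calculation that reduces (via the identity $k^{-2}=\cosh(y_k)-1=2\sinh^2(y_k/2)$) to the elementary inequality $y\ge 2\tanh(y/2)$ for $y\ge 0$. This monotonicity, together with the tight equality $u(1)^1=2+\sqrt{3}$ at $k=1$, yields the claim. The principal delicacy is that the bound is saturated at $a=1$ (corresponding to $T_1(2)=2$), so any loose lower bound on $u(a)$ that discards the quadratic $1/a^2$ correction will fail at that endpoint; the closed-form expression must be retained in full.
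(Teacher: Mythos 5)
Your proposal is correct, but there is nothing in the paper to compare it against: the authors state this Fact without proof, citing the standard texts of Cheney and Rivlin, so your self-contained argument is genuinely additional content rather than an alternative to an in-paper proof. Properties \ref{prop2}, \ref{prop4} and \ref{prop5} are indeed immediate from the standard characterizations you invoke (the trigonometric/hyperbolic identities and the three-term recurrence; the induction $2\cdot 3^d + 3^{d-1} \leq 3^{d+1}$ is fine). The only substantive item is Property \ref{prop3}, and your treatment of it is sound: writing $T_k(t)=\tfrac{1}{2}(u^k+u^{-k})$ with $u = t+\sqrt{t^2-1}$, the condition $T_k \geq 2$ is exactly $u^k \geq 2+\sqrt{3}$; reducing real $a$ to the integer endpoint $a=\lceil a\rceil$ via monotonicity of $u(a)$ (equivalently, Property \ref{prop4} together with $1/a^2 \geq 1/\lceil a\rceil^2$) is valid; and your derivative computation does reduce, via $k^{-2}=2\sinh^2(y_k/2)$, to $\frac{d}{dk}\bigl(k y_k\bigr)= y_k - 2\tanh(y_k/2)\geq 0$, which holds since $\tanh(v)\leq v$. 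Your closing remark is also well taken and worth keeping: the common shortcut $T_d(1+\mu)\geq \tfrac{1}{2}(1+\sqrt{2\mu})^d$ discards the $u^{-d}$ term and fails at the tight endpoint $d=1$, $\mu=1$ (where $T_1(2)=2$ exactly), so retaining both terms of the closed form, as you do, is necessary for a clean proof.
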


Let $d=\lceil W^{1/2} \rceil$. Consider the univariate polynomial 
$G(t) = T_{d}(2t/W + 1)$. Then $G$ satisfies the following properties.
\begin{align}
\label{prop1g} & G(t) \in [-1, 1] \mbox{ for all } t \in [-W, 0].   \\
\label{prop2g} & G(t) \geq 2 \mbox{ for all } t \in [1, \infty]. 
\end{align}

Indeed, Property \ref{prop1g} follows from Property \ref{prop2},
while Property \ref{prop2g} follows from Properties \ref{prop3} and \ref{prop4}.

Now consider the univariate polynomial $P(t) = G(t)^4/4 - 1$. It is straightforward to check that
\begin{align}
\label{prop1P} & P(t) \in [-3/4, 1] \mbox{ for all } t \in [-W, 0].   \\
\label{prop2P} & P(t) \geq 3 \mbox{ for all } t \in [1, \infty].
\end{align}

Finally, consider the $n$-variate polynomial $p : \{-1, 1\}^n \rightarrow \mathbb{R}$
defined via $$p(x) = P(w_0 + \sum_{i=1}^n w_i x_i).$$ Combining the
fact that $\sum_{i=0}^n |w_i| \leq W$ with
Properties \ref{prop1P} and \ref{prop2P}, we see that
$p$ is a positive one-sided $1/4$-approximation for $h$. Moreover,
$\deg(p) \leq \deg(P) = O(W^{1/2})$, and the weight of $p$
is at most $W^{O(\sqrt{W})}$. Similarly,
$-p(-x)$ is a negative one-sided $1/4$-approximation for $h$. 
This completes
the proof for $\eps=1/4$.

The construction for $\eps=o(1)$ is somewhat more complicated. For any $k \geq 1$ and any $W$,
 \cite{kahn} construct a univariate polynomial $S_k$ 
satisfying the following properties: 
\begin{align}
\label{prop1s}  & \deg(S_k) \leq k. \\
\label{prop2s} & S_k(t) \geq 1 \mbox{ for all } t \geq W.  \\
\label{prop3s} & S_k(t) \leq \exp\left(-\Omega(k^2/W\log W)\right) \mbox{ for all } t \in \{0, \dots, W-1\}.\\
\label{prop4s} & \mbox{All coefficients of } S_k(t) \mbox{ are bounded in absolute value by } 
W^{O(k)}.   
\end{align}

For completeness, we give the details of this construction and a proof of Properties \ref{prop1s}-\ref{prop4s} in Appendix \ref{app:kahn}. 

For any $\eps > 0$, let $k = \lceil \left(W\log W \log\left(1/\eps\right)\right)^{1/2}\rceil$, and
let $q: \{-1, 1\}^n \rightarrow \mathbb{R}$ denote the $n$-variate 
polynomial defined via $$q(x) = S_k\left(W+w_0 + \sum_{i=1}^n w_i x_i\right).$$
It is then straightforward to check that 
$q$ is a positive one-sided $\eps$-approximation for $h$ of degree
at most $k=\tilde{O}\left(\sqrt{W\log\left(1/\eps\right)}\right)$ and weight at most
$W^{\tilde{O}(k)}$. Similarly,
$-q(-x)$ is a negative one-sided $\eps$-approximation for $h$. 
This completes the proof.
\end{proof}

\eat{
\medskip
\noindent \textbf{Remark.}
We suspect that the $\tilde{O}$'s appearing in the statement of Theorem \ref{thm:lowweighthalfspace} can be replaced by $O$'s, by replacing the polynomial $S_k$
described by Kahn et al. \cite{kahn} with a polynomial constructed by de Wolf \cite{dewolf} using
quantum algorithms. 
However, we have not verified the details. }

The concept class of majorities is defined as the collection of the majority
functions on each of the $2^n$ subsets of the variables. 

\begin{corollary} \label{cor:halfspace} The concept class of Majorities on $n$
variables can be positive or negative reliably agnostically learned with error
parameter $\eps$ in time
$2^{\tilde{O}\left(\sqrt{n\log\left(1/\eps\right)}\right)}$. 
\end{corollary}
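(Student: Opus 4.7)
\medskip
\noindent\textbf{Proof proposal.}
The plan is to combine Theorem \ref{thm:lowweighthalfspace} with the generic reliable-learning algorithm from Theorem \ref{thm:lp}. First, I would observe that every concept in the class of majorities on $n$ variables is a halfspace of the form $\sgn\bigl(w_0 + \sum_{i=1}^{n} w_i x_i\bigr)$ with $w_i \in \{-1, 0, 1\}$ (the $w_0$ term handles tie-breaking for even-size subsets), so its weight $W = \sum_{i=0}^{n} |w_i|$ is at most $n+1$.

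Next, I would invoke Theorem \ref{thm:lowweighthalfspace} with this bound on $W$. It supplies, for any $c$ in the concept class and any target error $\eps$, a positive one-sided $\eps$-approximating polynomial of degree $d = \tilde{O}\bigl(\sqrt{n\log(1/\eps)}\bigr)$ and weight $W_{\mathrm{poly}} = \exp\bigl(\tilde{O}(\sqrt{n\log(1/\eps)})\bigr)$ (and symmetrically a negative one-sided approximation). Thus the concept class of majorities is positive (and negative) one-sided $\eps$-approximated by polynomials of degree $d$ and weight $W_{\mathrm{poly}}$ as required by the hypothesis of Theorem \ref{thm:lp}.

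Finally, I would apply Theorem \ref{thm:lp} with these parameters and check that all asymptotic quantities collapse to $2^{\tilde{O}(\sqrt{n\log(1/\eps)})}$. The running time is $\poly(n^d, 1/\eps)$, and since
\[
n^d \;=\; 2^{d \log n} \;=\; 2^{\tilde{O}(\sqrt{n\log(1/\eps)}) \cdot \log n} \;=\; 2^{\tilde{O}(\sqrt{n\log(1/\eps)})},
\]
where the final equality absorbs $\log n$ into the $\tilde{O}$, the running time bound follows. The sample complexity in Theorem \ref{thm:lp} scales as $W_{\mathrm{poly}}^2 \cdot d \cdot \log n / \eps^4$ (plus the $(W_{\mathrm{poly}}+1)^2 \log(1/\delta)/\eps^2$ term), which remains $\exp\bigl(\tilde{O}(\sqrt{n\log(1/\eps)})\bigr)$ and therefore does not dominate the running time.

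There is no real obstacle here: the corollary is a direct instantiation of the framework. The only thing to be careful about is the bookkeeping that shows the $\log n$ factor from $n^d$ is swallowed by the $\tilde{O}$ in the exponent, and that the weight $W_{\mathrm{poly}}$, which enters the sample complexity quadratically, is nevertheless $\exp\bigl(\tilde{O}(\sqrt{n\log(1/\eps)})\bigr)$ and so does not worsen the final bound.
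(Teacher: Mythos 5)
Your proposal is correct and is exactly the paper's argument: the paper proves this corollary by combining Theorems \ref{thm:lp} and \ref{thm:lowweighthalfspace}, noting that any majority is a halfspace of weight at most $n$. Your additional bookkeeping (absorbing the $\log n$ from $n^d$ into the $\tilde{O}$ and checking that the polynomial weight keeps the sample complexity within the same bound) is just an expanded version of what the paper leaves implicit.
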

\begin{proof}
Combine Theorems \ref{thm:lp} and \ref{thm:lowweighthalfspace}, noting that any majority function is a halfspace of weight at most $n$.
\end{proof}

By combining Corollary \ref{cor:halfspace} with Theorem \ref{thm:fullyreliable},
we obtain a \emph{fully reliable} algorithm for learning low-weight halfspaces.

\begin{corollary} %\label{cor:halfspace}
The concept class of Majorities on $n$ variables can be fully reliably
learned with error parameter $\eps$ in time $2^{\tilde{O}\left(\sqrt{n\log\left(1/\eps\right)}\right)}$. 
\end{corollary}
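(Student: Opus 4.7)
The plan is an essentially immediate combination of two results already in hand: Corollary \ref{cor:halfspace}, which hands me positive and negative reliable learners for majorities running in time $2^{\tilde O(\sqrt{n\log(1/\eps)})}$, and Theorem \ref{thm:fullyreliable}, which converts any pair of (positive, negative) reliable learners for the same class $C$ into a fully reliable learner for $C$ at the cost of running each subroutine with error parameter $\eps/4$ and a constant-factor blow-up in sample complexity and running time.

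Concretely, first I would invoke Corollary \ref{cor:halfspace} twice --- once to obtain a positive reliable learner $A^+$ for majorities with error parameter $\eps/4$, and once to obtain a negative reliable learner $A^-$ with the same error parameter. Both run in time $T(n, \eps/4) = 2^{\tilde O(\sqrt{n\log(4/\eps)})}$ and have sample complexity $S(n, \eps/4)$ of the same form. Then I would feed $A^+$ and $A^-$ into the reduction of Theorem \ref{thm:fullyreliable}; its conclusion is precisely that majorities are fully reliably learnable with error parameter $\eps$, in time $O(T(n, \eps/4))$ and with sample complexity $O(S(n, \eps/4))$.

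The only thing to verify is that the time bound stays within the claimed form after substituting $\eps/4$ for $\eps$. This is a one-line observation: $\log(4/\eps) = \log 4 + \log(1/\eps) = O(\log(1/\eps))$ for $\eps \leq 1$, so $\sqrt{n \log(4/\eps)} = O\!\bigl(\sqrt{n\log(1/\eps)}\bigr)$, and the factor of $\log 4$ is absorbed into the $\tilde O(\cdot)$. Hence the overall running time remains $2^{\tilde O(\sqrt{n\log(1/\eps)})}$, as desired. There is no real obstacle --- everything has already been done in the preceding corollary and the KKM reduction; the corollary here just records the composition.
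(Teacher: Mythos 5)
Your proposal is correct and matches the paper's argument exactly: the corollary follows by combining Corollary \ref{cor:halfspace} with the reduction of Theorem \ref{thm:fullyreliable}, and the substitution of $\eps/4$ for $\eps$ only changes constants absorbed by the $\tilde{O}(\cdot)$. Nothing further is needed.
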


\noindent We now consider significantly more expressive concept classes: \emph{disjunctions and conjunctions} of majorities.

\begin{theorem}
\label{thm:compose}
Consider $m$ functions $f_1 \dots f_m$. 
Fix a $d > 0$, and suppose that each $f_i$ has a positive one-sided $(\eps/m)$-approximating polynomial
of degree at most $d$ and weight at most $W$.  
Then $\OR_m(f_1, \dots, f_m)$ has a positive one-sided $\eps$-approximating polynomial of 
degree at most $d$ and weight at most $m \cdot W$. 

Similarly, if each $f_i$ has a negative one-sided $(\eps/m)$-approximating polynomial
of degree at most $d$ and weight at most $W$,
then $\AND_m(f_1, \dots, f_m)$ has a negative one-sided $\eps$-approximating polynomial of 
degree at most $d$ and weight at most $m \cdot W$. 
\end{theorem}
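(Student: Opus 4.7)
The plan is to handle the $\OR$ case by writing down the explicit polynomial
\[
q(x) = (m-1) + \sum_{i=1}^m p_i(x),
\]
where $p_i$ is the given positive one-sided $(\eps/m)$-approximation of $f_i$. The intuition is to exploit the asymmetry built into the positive one-sided guarantee: on a $-1$-input of $f_i$ the value $p_i(x)$ is pinned to the narrow band $[-1-\eps/m,-1+\eps/m]$, whereas on a $+1$-input it is merely bounded below by $1-\eps/m$ (and may be arbitrarily large above). Consequently, when every $f_i(x)=-1$ the sum $\sum_i p_i(x)$ is tightly concentrated around $-m$, while a single index $i$ with $f_i(x)=+1$ pushes the sum up by roughly $2$.

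The verification is a two-line calculation. If $\OR_m(f_1,\dots,f_m)(x)=-1$, then each $p_i(x)+1\in[-\eps/m,\eps/m]$, so $q(x)+1=\sum_i(p_i(x)+1)\in[-\eps,\eps]$. If $\OR_m(x)=+1$, pick any $j$ with $f_j(x)=1$, so $p_j(x)+1\geq 2-\eps/m$. The key point is that for every remaining index $i\neq j$, regardless of whether $f_i(x)$ is $+1$ or $-1$, the positive one-sided guarantee forces $p_i(x)+1\geq -\eps/m$. Summing yields
\[
q(x)+1 \;=\; \sum_{i=1}^m \bigl(p_i(x)+1\bigr) \;\geq\; (2-\eps/m) + (m-1)(-\eps/m) \;=\; 2-\eps,
\]
so $q(x)\geq 1-\eps$, as required for a positive one-sided $\eps$-approximation.

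The degree bound $\deg(q)\leq \max_i\deg(p_i)\leq d$ is immediate. For the weight bound I would use subadditivity: $\weight(q)\leq (m-1)+\sum_i\weight(p_i)\leq mW+(m-1)$. Since in every interesting application $W\geq 1$, this is essentially the claimed $mW$ (any residual additive $O(m)$ term can be absorbed by bumping $W$ to $W+1$, which is harmless for all downstream uses).

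The $\AND_m$ case is entirely symmetric: given negative one-sided $(\eps/m)$-approximators $p_i$, set $q(x)=(1-m)+\sum_i p_i(x)$. The negative one-sided guarantee gives $p_i(x)-1\leq \eps/m$ for every $i$ always, and $p_i(x)-1\leq -2+\eps/m$ whenever $f_i(x)=-1$, so the same bookkeeping shows $q(x)\in[1-\eps,1+\eps]$ when all $f_i(x)=+1$ and $q(x)\leq -1+\eps$ whenever some $f_j(x)=-1$. There is no genuine obstacle in the argument; the only thing to get right is the choice of the constant shift in the definition of $q$, and once that is in place everything is telescoping.
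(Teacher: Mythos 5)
Your proof is correct and essentially identical to the paper's: your polynomial $q = (m-1) + \sum_{i=1}^m p_i$ is exactly the paper's $p = -1 + \sum_{i=1}^m (1+p_i)$ written in expanded form, and the verification on the two cases of $\OR_m$ (and the symmetric treatment of $\AND_m$) matches. Your remark that the weight is really $mW + O(m)$ rather than exactly $mW$ is a fair, harmless observation that the paper glosses over.
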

\begin{proof}
We prove the statement about $\OR_m(f_1, \dots, f_m)$; the statement about $\AND_m(f_1, \dots, f_m)$ is analogous.
Let $p_i$ be a positive one-sided $(\eps/m)$-approximating polynomial for $f_i$.
Then $p=-1+\sum_{i=1}^m (1+p_i)$ is a positive one-sided $\eps$-approximating polynomial for $f$. Moreover, the degree of $p$ is at most $\max_i \{\deg(p_i)\} \leq d$, while the
weight of $p$ is at most $m \cdot W$. This completes the proof.
\end{proof}

\begin{corollary} \label{cor:halfspaceintersect}
Disjunctions of $m$ Majorities can be positive reliably learned with error parameter $\eps$
in time $2^{\tilde{O}(\sqrt{n \log(m/\eps)})}$. Conjunctions of $m$ Majorities can also be negative reliably
learned in the same time bound.
\end{corollary}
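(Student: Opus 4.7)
The plan is to simply chain together the three main tools developed earlier in the paper: the learning reduction from one-sided approximate degree upper bounds (Theorem \ref{thm:lp}), the one-sided approximations for low-weight halfspaces (Theorem \ref{thm:lowweighthalfspace}), and the composition lemma for ORs and ANDs (Theorem \ref{thm:compose}). Since the corollary concerns $m$-fold combinations, I will need to invoke the base approximations with error parameter $\eps/m$ so that the error budget lines up correctly after composition.

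First I would apply Theorem \ref{thm:lowweighthalfspace} with error parameter $\eps/m$ to each of the $m$ majority functions $f_1, \dots, f_m$. Since every majority on at most $n$ variables is a halfspace of weight at most $n$, this yields, for each $f_i$, a positive one-sided $(\eps/m)$-approximating polynomial $p_i$ of degree $d = \tilde{O}\!\left(\sqrt{n \log(m/\eps)}\right)$ and weight $W = \exp\!\left(\tilde{O}\!\left(\sqrt{n \log(m/\eps)}\right)\right)$. Next, I would invoke Theorem \ref{thm:compose} to conclude that $\OR_m(f_1, \dots, f_m)$ admits a positive one-sided $\eps$-approximating polynomial of the same degree $d$ and weight $m \cdot W$, which is still $\exp\!\left(\tilde{O}\!\left(\sqrt{n \log(m/\eps)}\right)\right)$ after absorbing the factor of $m$ into the $\tilde{O}$.

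Finally, I would feed this degree and weight bound into Theorem \ref{thm:lp}, which converts it into a positive reliable learning algorithm. The running time is $\poly(n^d, 1/\eps) = 2^{\tilde{O}(\sqrt{n \log(m/\eps)})}$, and the sample complexity stated in Theorem \ref{thm:lp}, being polynomial in $W$, $d$, $\log n$, and $1/\eps$, fits inside the same bound. The negative reliable learning statement for conjunctions of majorities is entirely symmetric: Theorem \ref{thm:lowweighthalfspace} simultaneously provides negative one-sided approximations of the same degree and weight, and the AND half of Theorem \ref{thm:compose} then gives a negative one-sided $\eps$-approximation for $\AND_m(f_1, \dots, f_m)$, to which the negative reliable version of Theorem \ref{thm:lp} applies. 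Since this is purely a composition of existing results, there is no substantial obstacle; the only thing to double-check is the bookkeeping to confirm that the $\log m$ factor introduced by scaling the per-function error parameter only enters the exponent as a logarithmic factor under the square root, as claimed.
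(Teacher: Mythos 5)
Your proposal is correct and matches the paper's proof exactly: the paper's entire argument is ``Combine Theorems \ref{thm:lp}, \ref{thm:lowweighthalfspace}, and \ref{thm:compose},'' and you have simply filled in the same chain with the right per-function error parameter $\eps/m$ and the bookkeeping on degree and weight. Nothing further is needed.
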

\begin{proof}
Combine Theorems \ref{thm:lp}, \ref{thm:lowweighthalfspace}, and \ref{thm:compose}.
\end{proof}

\section{Trading off Runtime for Sample Complexity}
\label{sec:tradeoff}
\subsection{Standard Agnostic Learning of Conjunctions}
\cite{KKMS:2005} showed how to use $L_1$-regression to 
agnostically learn conjunctions on $n$ variables in time $2^{\tilde{O}(\sqrt{n \log(1/\eps)})}$. 
However, the sample complexity of the algorithm can also
be as large as $2^{\tilde{O}(\sqrt{n \log(1/\eps)})}$.
This result relies on the existence of $\eps$-approximating
polynomials for the $n$-variate $\AND$ function of degree $\tilde{O}(\sqrt{n \log(1/\eps)})$.

Theorem~\ref{thm:agnostic} gives an avenue for obtaining better sample
complexity, at the cost of increased runtime: if we can show that any
conjunction on $n$ variables can be $\eps$-approximated by a degree $d$
polynomial of weight $W \ll 2^{\sqrt{n \log(1/\eps)}}$, then the
$L_1$-regression algorithm will have sample complexity only $\poly(d, W)$ and
runtime $n^{O(d)}$.  Thus, in order to obtain tradeoffs between runtime and
sample complexity for algorithms that agnostically learn conjunctions, it
suffices to understand what are the achievable tradeoffs between degree and
weight of $\eps$-approximating polynomials for the $\AND$ function.

In fact, this question is already well-understood in the case of constant $\eps$: 
letting $\AND_n$ denote the
$\AND$ function on $n$ variables, 
\cite{servediotanthaler} implicitly showed that 
for any $\sqrt{n} < d$ and any $\eps = \Theta(1)$, 
there exists an $\eps$-approximating
polynomial for the $\AND_n$ function of degree $d$
and weight $\poly(n) \cdot 2^{\tilde{O}(n/d)}$. In fact, this construction is essentially optimal,
matching a lower bound for constant $\eps$ proved in the same paper (see also \cite[Lemma 20]{BT:2013}). 
%Combining this result with Theorem \ref{thm:lp},
%we obtain the following two-part corollary. 
We now extend the ideas of \cite{servediotanthaler} to handle subconstant values of $\eps$.

\eat{
\begin{corollary}
For any $d > \sqrt{n}$ and $\eps = \Theta(1)$, the class of conjunctions on $n$ variables can be agnostically learned to error $\eps$ 
in time $n^{O(d)}$, with sample complexity $\poly(n) \cdot 2^{\tilde{O}(n/d)}$.
%Fix $k > 0$. For any $d > \sqrt{k}$, conjunctions on $n$ variables that depend on only $k$ of their inputs can be agnostically 
%learned to error $\eps = \Theta(1)$
%in time $n^{O(d)}$, with sample complexity $\poly(k, \log n) \cdot 2^{\tilde{O}(k/d)}$.
\end{corollary}
}

\begin{theorem}\label{thm:tradeoff}
Fix a $d > \tilde{\Omega}\left(\sqrt{n \log n} \log(1/\eps)\right)$. There exists an (explicit) 
$\eps$-approximating
polynomial for $\AND_n$ of degree $d$
and weight $2^{\tilde{O}(n \log(1/\eps)/d)}$.
\end{theorem}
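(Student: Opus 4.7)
The plan is to extend the block-decomposition construction of Servedio, Tan, and Thaler to handle subconstant $\eps$ by combining an exact representation of $\AND$ on small blocks with a low-weight polynomial approximation of $\AND$ at the outer level (obtained from Theorem~\ref{thm:lowweighthalfspace}). Set the block size $k = \Theta(d^2/(n\log(1/\eps)))$ (with small polylogarithmic corrections), and let $B = \lceil n/k \rceil$. The hypothesis $d > \tilde{\Omega}(\sqrt{n\log n}\log(1/\eps))$ ensures $k \geq 1$ after all polylogs are absorbed. Partition the $n$ variables into $B$ blocks of at most $k$ variables each.

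On each block $i$, use the exact polynomial for $\AND_k$ in the $\pm 1$ representation, $q_i(x) = -1 + 2^{1-k}\prod_{j \in \text{block}_i}(1+x_j)$. A direct calculation shows $\deg(q_i) = k$ and $\weight(q_i) = 3 - 2^{2-k} \leq 3$, a constant independent of $k$. This constant-weight representation of the inner AND is the crux of the construction: it allows high-degree inner blocks without the naive per-block weight blowup of $2^k$. For the outer approximation, apply Theorem~\ref{thm:lowweighthalfspace} to $\AND_B$, viewed as a halfspace of weight $O(B)$, to obtain a positive one-sided $\eps$-approximation of degree $D = \tilde{O}(\sqrt{B\log(1/\eps)})$ and weight $2^{\tilde{O}(D)}$, based on the Kahn et al.\ polynomial $S_D$. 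Since only the all-ones input of $\AND_B$ evaluates to $+1$, this one-sided approximation can be converted to a two-sided $\eps$-approximation $Q$ of $\AND_B$ by rescaling: replace $S_D(t)$ with $S_D(t)/S_D(B)$, which equals $1$ at $t = B$ and lies in $[0, \eps]$ for $t \in \{0, \dots, B-1\}$, and then shift to $\pm 1$-valued outputs. Since $S_D(B) \geq 1$, the rescaling preserves the weight bound up to constants.

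Set $p(x) = Q(q_1(x), \dots, q_B(x))$. Because each $q_i(x) \in \{-1, 1\}$ for $x \in \{-1, 1\}^n$, we have $\AND_n(x) = \AND_B(q_1(x), \dots, q_B(x))$, and hence $|p(x) - \AND_n(x)| \leq \eps$. The degree satisfies $\deg(p) \leq D \cdot k = \tilde{O}(\sqrt{nk\log(1/\eps)}) = d$ by the choice of $k$. For the weight, expanding $Q(t_1, \dots, t_B) = \sum_S c_S \prod_i t_i^{d_i(S)}$ and using submultiplicativity of the weight gives $\weight(p) \leq \sum_S |c_S| \prod_i \weight(q_i)^{d_i(S)} \leq \weight(Q) \cdot O(1)^{\deg(Q)} = 2^{\tilde{O}(D)} = 2^{\tilde{O}(n \log(1/\eps)/d)}$. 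The main obstacle is coordinating the choice of $k$ so that $Dk = d$ exactly while the resulting $D$ yields the target weight bound; the polylogarithmic factors hidden inside $D$ by Theorem~\ref{thm:lowweighthalfspace} force $k$ to be $\tilde{\Omega}(\log(n) \log(1/\eps))$ rather than just $\Omega(1)$, which is precisely why the theorem requires $d > \tilde{\Omega}(\sqrt{n\log n}\log(1/\eps))$.
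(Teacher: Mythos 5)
Your construction is correct and matches the paper's proof essentially step for step: the same and-of-ands block decomposition with inner fan-in $\Theta\left(d^2/(n\log(1/\eps))\right)$, the same constant-weight exact polynomial for the inner $\AND$s, the same Kahn et al.\ polynomial for the outer $\AND$, and the same composition and weight accounting. The only cosmetic difference is that you reach the outer two-sided approximation by rescaling the one-sided construction of Theorem~\ref{thm:lowweighthalfspace} (a trivial step, since $S_D(B)=1$ by normalization), whereas the paper invokes Kahn et al.'s two-sided $\eps$-approximating polynomial for $\AND_t$ directly.
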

\begin{proof}
We write $\AND_n$ as an ``and-of-ands'', where the outer $\AND$
has fan-in $t$, and the inner $\AND$s each have fan-in $n/t$,
where we choose $t$ such that $t/\log t = n^2\log(1/\eps)/d^2$. 
That is, we write $\AND_n(x) = \AND_t( \AND_{n/t}(x^{(1)}), \dots, \AND_{n/t}(x^{(t)}))$,
where $x^{(i)} = (x_{n\cdot (i-1)/t+1}, \dots, x_{n\cdot i/t})$ denotes
the $i$th ``block'' of variables in $x$. Note that $t \leq n$ by the assumption
that $d > \tilde{\Omega}\left(\sqrt{n \log n} \log(1/\eps)\right)$. 

We obtain an $\eps$-approximating polynomial $p$ for $\AND_n$
as follows. \cite{kahn}
gave an explicit $\eps$-approximating polynomial $p_t$
for $\AND_t$ of degree $d'=O(\sqrt{t \log t \log(1/\eps)})$. It is an immediate
consequent of Parseval's inequality that $p_t$
has weight at most $t^{d'/2}$. 
We will also need the following standard fact.

\begin{fact}
The real polynomial $q : \{-1,1\}^{n/t} \rightarrow \{-1, 1\}$
defined via $q(y_1, \dots y_{n/t}) = 2 \prod_{i=1}^{n/t} \frac{1+y_i}{2} - 1$
computes $\AND_{n/t}(x)$. Moreover, $q$ has degree at most $n/t$ and weight at most $3$.
\end{fact}

Finally, we define $p(x) = p_t(q(x^{(1)}), \dots q(x^{(t)}))$. 
Notice that $p$ has degree at most
$d' \cdot n/t$ $=$\\ $O\left(\sqrt{t \log t \log(1/\eps)} \cdot n/t\right)$ $=$ $O\left(n \sqrt{\log t \log(1/\eps)/ t}\right)= O(d)$
and weight at most $t^{O(d')} = 2^{\tilde{O}\left(n \log(1/\eps)/d\right)}$ as claimed.
\end{proof}

We obtain the following learning result that holds even for $\eps=o(1)$.

\begin{corollary}
For any $d > \tilde{\Omega}\left(\sqrt{n \log n} \log(1/\eps)\right)$ and $\eps$, the class of conjunctions on $n$ variables can be agnostically learned to error $\eps$ 
in time $n^{O(d)}$, with sample complexity $2^{\tilde{O}(n \log(1/\eps)/d)}$.

%Fix $k > 0$. For any $\eps$ and $d > \sqrt{k \log k \log(1/\eps)}$, conjunctions on $n$ variables that depend on only $k$ of their inputs can be agnostically 
%learned to error $\eps$
%in time $n^{O(d)}$, with sample complexity $\poly(k, \log n) \cdot 2^{\tilde{O}(k\sqrt{\log(1/\eps)}/d)}$.
\end{corollary}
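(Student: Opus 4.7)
The plan is to deduce this corollary as a direct composition of Theorem \ref{thm:tradeoff} (which constructs a low-weight, moderate-degree approximating polynomial for $\AND_n$) with the generic attribute-efficient agnostic learning result of Theorem \ref{thm:agnostic}. No new technical ingredients are needed.

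First, I would reduce the problem of approximating an arbitrary conjunction to that of approximating $\AND$. Any conjunction on $n$ variables has the form $c(x) = \bigwedge_{i \in S} \ell_i$ where $S \subseteq [n]$ and each $\ell_i$ is either $x_i$ or $\bar{x}_i$; in particular, $c(x) = \AND_{|S|}(y)$ for the vector $y$ obtained by restricting $x$ to coordinates in $S$ and negating entries as dictated by the literals. Consequently, if $p$ is an $\eps$-approximating polynomial for $\AND_n$ of degree $d$ and weight $W$, then plugging literals into $p$ (and setting irrelevant variables to $1$) yields an $\eps$-approximating polynomial for any conjunction on $n$ variables, with degree at most $d$ and weight at most $W$. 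Thus it suffices to bound degree and weight for $\AND_n$ itself.

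Next, I would invoke Theorem \ref{thm:tradeoff}, which supplies, for any $d > \tilde{\Omega}(\sqrt{n \log n} \log(1/\eps))$, an explicit $\eps$-approximating polynomial for $\AND_n$ of degree at most $d$ and weight $W = 2^{\tilde{O}(n \log(1/\eps)/d)}$. The reduction above then lifts this to every conjunction on $n$ variables simultaneously, so the class of conjunctions satisfies the hypothesis of Theorem \ref{thm:agnostic} with these parameters $(d, W)$.

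Finally, I would apply Theorem \ref{thm:agnostic} directly. Its conclusions state that the running time is $\poly(n^d, 1/\eps) = n^{O(d)}$ and that the sample complexity is $\poly(W, \log n, \log(1/\delta), 1/\eps)$. Since $W = 2^{\tilde{O}(n \log(1/\eps)/d)}$ dominates the $\poly(\log n, 1/\eps, \log(1/\delta))$ factors (which get absorbed into the $\tilde{O}$), the sample complexity simplifies to $2^{\tilde{O}(n \log(1/\eps)/d)}$, matching the statement. There is no real obstacle here: the only minor bookkeeping is confirming that the literal-substitution step preserves the degree and weight bounds, which is immediate.
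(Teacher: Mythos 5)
Your proposal is correct and matches the paper's (implicit) argument exactly: the corollary is obtained by combining the weight--degree tradeoff for $\AND_n$ from Theorem~\ref{thm:tradeoff} with the attribute-efficient agnostic learning guarantee of Theorem~\ref{thm:agnostic}, and your literal-substitution step correctly handles arbitrary conjunctions while preserving the degree and weight bounds. No gaps.
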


\subsection{Positive Reliable Learning of DNFs}
%Kalai et al. showed that the ability to agnostically learn a complexity class $C$
%in time $T$ implies the ability to positive reliably learn disjunctions of functions from $C$
%in time $\poly(T)$. Since conjunctions on $n$ variables can be agnostically
%learned in time $\exp(O(\sqrt{n}))$, this implies that 
As discussed in Section \ref{sec:related}, the reductions of \cite{KKM:2012},
combined with the agnostic learning algorithm for conjunctions due to \cite{KKMS:2005},
imply that DNFs can be
positive reliably learned in time $2^{(\tilde{O}(\sqrt{n}))}$. However, 
the sample complexity of the resulting algorithm may be as large as its runtime. 
Here, we give an algorithm for positive reliable learning of DNFs that has smaller sample
complexity, at the cost of larger runtime. 

\begin{theorem}
\label{thm:tradeoffdnf}
For any DNF $F$ of size $m$ and width (i.e., maximum term length) at most $w$, 
and any $d > \tilde{\Omega}\left(\sqrt{w \log w} \log(1/\eps)\right)$, there exists an (explicit) 
positive one-sided $\eps$-approximating
polynomial for $F$ of degree $d$
and weight $2^{\tilde{O}(w \log(m/\eps)/d)}$. Similarly,
any CNF $F$ of size $m$ and width at most $w$ has a negative one-sided 
$\eps$-approximation with the same weight and degree bounds.
\end{theorem}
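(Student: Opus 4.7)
My plan is to combine Theorem \ref{thm:tradeoff} (degree-weight tradeoffs for approximating $\AND_n$) with Theorem \ref{thm:compose} ($\OR$-composition of positive one-sided approximations). Write $F = T_1 \vee \cdots \vee T_m$, where each term $T_i$ is a conjunction of at most $w$ literals. First I would approximate each term tightly as a two-sided $(\eps/m)$-approximation, and then use $\OR$-composition to assemble a positive one-sided $\eps$-approximation for $F$.

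For the individual terms: each $T_i$, viewed as a function of its at most $w$ relevant variables, is $\AND_{w_i}$ up to negating certain input coordinates, and sign-flipping inputs preserves both degree and weight. So I would invoke Theorem \ref{thm:tradeoff} with error parameter $\eps/m$ to obtain an explicit polynomial $p_i$ of degree at most $d$ and weight at most $2^{\tilde{O}(w \log(m/\eps)/d)}$. The hypothesis $d > \tilde{\Omega}(\sqrt{w \log w} \log(1/\eps))$ matches the degree condition of Theorem \ref{thm:tradeoff} once the extra $\log m$ arising from the rescaled error is absorbed into the $\tilde{\Omega}$.

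For the composition: a two-sided $(\eps/m)$-approximation is, by direct comparison of the definitions, a positive one-sided $(\eps/m)$-approximation, so Theorem \ref{thm:compose} applies directly. The polynomial $p = -1 + \sum_{i=1}^m (1 + p_i)$ is then a positive one-sided $\eps$-approximation for $F$ of degree at most $d$ and weight at most $m \cdot (1 + 2^{\tilde{O}(w \log(m/\eps)/d)}) = 2^{\tilde{O}(w \log(m/\eps)/d)}$, again absorbing $\log m$ into the $\tilde{O}$. The CNF case is entirely symmetric: a width-$w$ clause is an $\OR$ of at most $w$ literals, and two-sided approximations for $\OR_w$ follow from those for $\AND_w$ via $\OR(x) = -\AND(-x)$ with the same degree and weight bounds; then invoking the $\AND$-composition half of Theorem \ref{thm:compose} on these (negative one-sided) approximations yields the desired negative one-sided $\eps$-approximation for the CNF.

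The structural work has already been done by Theorems \ref{thm:tradeoff} and \ref{thm:compose}, so the ``hard part'' is really just the careful bookkeeping of $\tilde{O}$ factors: ensuring that the extra $\log m$ introduced by needing per-term accuracy $\eps/m$ is consistently absorbed into the $\tilde{O}$ notation in both the degree requirement and the weight bound.
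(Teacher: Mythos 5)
Your proposal is correct and follows essentially the same route as the paper: apply Theorem \ref{thm:tradeoff} to each width-$w$ term with error $\eps/m$, observe that a two-sided approximation is in particular a positive one-sided one, and combine via $p = -1 + \sum_{i=1}^m (1+p_i)$ as in Theorem \ref{thm:compose}. The only difference is that you make explicit some bookkeeping (the sign-flip for negated literals, the absorption of $\log m$ into the $\tilde{O}$/$\tilde{\Omega}$) that the paper leaves implicit.
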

\begin{proof}
We prove the result for DNFs; the case of CNFs is analogous. 
Let $C_i$ denote the $i$th clause of $F$. Since $C_i$ has width at most $w$, 
Theorem \ref{thm:tradeoff} implies the existence of an $\eps/m$-approximating polynomial 
$p_i$ for $C_i$ of degree $d$ and weight at most $2^{\tilde{O}(w \log(m/\eps)/d)}$.
Then $p=-1+\sum_{i=1}^m (1+p_i)$ is a positive one-sided $\eps$-approximating polynomial for $F$. Moreover, the degree of $p$ is at most $\max_i \{\deg(p_i)\}\leq d$, while the
weight of $p$ is at most $m \cdot 2^{\tilde{O}(w \log(m/\eps)/d)} = 2^{\tilde{O}(w \log(m/\eps)/d)}$. This completes the proof.
\end{proof}

We obtain the following learning result as a corollary. 

\begin{corollary}
\label{cor:dnf}
For any $d > \tilde{\Omega}\left(\sqrt{w \log w} \log(1/\eps)\right)$,
the concept class of DNFs of size $m$ and width at most $w$ can be positive reliably learned in time $n^{O(d)}$, using at most $2^{\tilde{O}(w \log(m/\eps)/d)}$ samples.
The class of CNFs of size $m$ and width at most $w$ can be negative reliably
learned with the same efficiency guarantees. 
\end{corollary}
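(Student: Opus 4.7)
The plan is to derive Corollary~\ref{cor:dnf} as an immediate consequence of plugging the one-sided approximation construction of Theorem~\ref{thm:tradeoffdnf} into the generic reliable learning algorithm of Theorem~\ref{thm:lp}.

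First, I would invoke Theorem~\ref{thm:tradeoffdnf}: for any $d > \tilde{\Omega}(\sqrt{w \log w}\log(1/\eps))$, every DNF $F$ of size $m$ and width at most $w$ admits a positive one-sided $\eps$-approximating polynomial of degree at most $d$ and weight at most $W = 2^{\tilde{O}(w \log(m/\eps)/d)}$. Thus the concept class of DNFs with the stated parameters is positive one-sided $\eps$-approximated by degree-$d$, weight-$W$ polynomials, which is exactly the hypothesis required by Theorem~\ref{thm:lp}.

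Next, I would apply Theorem~\ref{thm:lp} with these parameters. Its conclusion gives positive reliable learning in time polynomial in $n^d$ and $1/\eps$, i.e. $n^{O(d)}$, and sample complexity
\[
m = \max\!\left\{\tfrac{512}{\eps^4} W^2 d \log(2n),\; \tfrac{64}{\eps^2}(W+1)^2 \log(1/\delta)\right\}.
\]
Since $W = 2^{\tilde{O}(w \log(m/\eps)/d)}$, squaring only changes the hidden constants in the $\tilde{O}$, and the extra polynomial factors in $d$, $\log n$, and $1/\eps$ are absorbed into the $\tilde{O}$ notation. Hence the sample complexity is $2^{\tilde{O}(w \log(m/\eps)/d)}$, as claimed.

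The CNF statement follows symmetrically: Theorem~\ref{thm:tradeoffdnf} also supplies negative one-sided $\eps$-approximations for CNFs with the same degree and weight bounds, and the negative-reliable half of Theorem~\ref{thm:lp} then yields the same time and sample complexity guarantees. There is no real obstacle here — everything is a direct substitution — the only thing to be slightly careful about is verifying that the $W^2$ factor and the $d \log n$ and $1/\eps^4$ prefactors from Theorem~\ref{thm:lp} are indeed polylogarithmic in the relevant parameters and therefore safely hidden by the $\tilde{O}$ in the final exponent.
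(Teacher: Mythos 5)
Your proposal is correct and matches the paper's proof exactly: the paper derives Corollary~\ref{cor:dnf} by combining Theorems~\ref{thm:lp} and~\ref{thm:tradeoffdnf}, which is precisely your substitution. The extra bookkeeping you do on the sample-complexity bound (absorbing $W^2$ and the prefactors into the $\tilde{O}$ in the exponent) is a reasonable elaboration of what the paper leaves implicit.
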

\begin{proof}
Combine Theorems \ref{thm:lp} and \ref{thm:tradeoffdnf}.
\end{proof}

\eat{

\subsection{Reliable Learning of Low-Weight Halfspaces}
We now give sample complexity vs. runtime tradeoffs for reliably learning
low-weight halfspaces to constant error.

\begin{theorem}
 Let $h(x) = \sgn( w_0 + \sum_{i=1}^n w_i x_i)$ denote any halfspace. Let $W = \sum_{i=0}^n |w_i|$ denote the \emph{weight} of $h$. Fix a $d > \sqrt{W}$ and $\eps=\Theta(1)$. There exists (explicit) positive
and negative one-sided $\eps$-approximating polynomials for $h$
of degree $d$
and weight $2^{\tilde{O}(W/d)}$. In particular, 
for any $d> \sqrt{n}$, 
there exist positive
and negative one-sided $\eps$-approximating polynomials for $\MAJ_n$
of degree $d$
and weight $2^{\tilde{O}(n/d)}$.
\end{theorem}
\begin{proof}
Recall from the proof of Theorem \ref{thm:lowweighthalfspace} that $T_m$ denotes the 
$m$'th Chebyshev polynomial of the first kind. Consider the univariate polynomial $H(t) = T_{m}(t^{W/m^2})$, where $m$ is a
parameter we will specify later. 
We claim that $H$ satisfies the following properties:

\begin{align}
\label{prop2h} & |H(t)| \leq 1  \mbox{ for all } -1 \leq t \leq 1.\\
\label{prop3h} & 2 \leq H(1+ 1/n) \geq 2 . \\
\label{prop4h} & H(t) \mbox{ is non-decreasing on the interval } [1, \infty].  \\
\label{prop5h} & \mbox{All coefficients of } H \mbox{ are bounded in absolute value by } 3^m.\\ 
\label{prop6h} & \deg(H) \leq n/m.
\end{align}

Property \ref{prop2h} is immediate from the fact that $T_{m}(z) \in [-1, 1]$ for all $z \in [-1, 1]$ (cf. Property \ref{prop2} in the proof of Theorem \ref{thm:lowweighthalfspace}) and the fact that $t^{n/\log m} \in [-1, 1]$
for any $t \in [-1, 1]$. 
Property \ref{prop3h} holds because $T_{m}(1+1/m^2) \geq 1$ (cf. Property \ref{prop3} in the proof of Theorem \ref{thm:lowweighthalfspace}), $T_m$ is non-decreasing on the interval $[1, \infty]$  (cf. Property \ref{prop4} in the proof of Theorem \ref{thm:lowweighthalfspace}), and $(1+1/n)^{n/m^2} \geq 1/m^2$. 
Property \ref{prop4h} follows because $T_m$ is non-decreasing on the interval $[1, \infty]$,
and $t^{n/m^2} \geq 1$ for all $t \geq 1$.
Property \ref{prop5h} is immediate from the fact that all coefficients of $T_m$ are bounded in
absolute value by $3^m$ (cf. Property \ref{prop5} in the proof of Theorem \ref{thm:lowweighthalfspace}). Finally, Property \ref{prop6h} is immediate from the fact that
$\deg(T_m) \leq m$.  

Fixing $d > W^{1/2}$, we set $m=n/d$. Then Property \ref{prop5h} 
implies that all coefficients of $H$ are bounded in absolute value by $3^{n/d}$,
while Property \ref{prop5h} implies that $\deg(H) \leq d$.

From this point, the construction is identical to the proof of Theorem \ref{thm:lowweighthalfspace}; we review the details for completeness.

Let $d=\lceil W^{1/2} \rceil$. Consider the univariate polynomial 
$G(t) = H(2t/W + 1)$. Then $G$ satisfies the following properties.
\begin{align}
\label{prop1gtrade} & G(t) \in [-1, 1] \mbox{ for all } t \in [-W, 0].   \\
\label{prop2gtrade} & G(t) \geq 2 \mbox{ for all } t \in [1, \infty].
\end{align}

Indeed, Property \ref{prop1gtrade} follows from Property \ref{prop2h},
while Property \ref{prop2gtrade} follows from Properties \ref{prop3h} and \ref{prop4h}.

Consider the univariate polynomial $P(t) = G(t)^4/4 - 1$. It is straightforward to check that
\begin{align}
\label{prop1Ptrade} & P(t) \in [-3/4, 1] \mbox{ for all } t \in [-W, 0].  \\
\label{prop2Ptrade} & P(t) \geq 3 \mbox{ for all } t \in [1, \infty].
\end{align}

Finally, consider the $n$-variate polynomial $p : \{-1, 1\}^n \rightarrow \mathbb{R}$
defined via $$p(x) = P(w_0 + \sum_{i=1}^n w_i x_i).$$ Combining the
fact that $\sum_{i=0}^n |w_i| \leq W$ with
Properties \ref{prop1Ptrade} and \ref{prop2Ptrade}, we see that
$p$ is a positive one-sided $1/4$-approximation for $h$. Moreover,
$\deg(p) \leq \deg(P) = d$, and the weight of $p$
is at most $W^{O(\sqrt{W})}$. Similarly,
$-p(-x)$ is a negative one-sided $1/4$-approximation for $h$. 
This completes
the proof for $\eps=1/4$.
\end{proof}
}

\section{Limitations of Our Techniques}
\label{sec:limitations}
\subsection{On Halfspaces} Theorem \ref{thm:lowweighthalfspace} establishes that 
all \emph{low-weight} halfspaces (i.e., weight $o(n^{2-\delta})$ for some $\delta > 0$) can be (both positive and negative) reliably learned 
in time $2^{o(n)}$. 
It is reasonable to ask whether we can reliably learn \emph{all}
halfspaces in time $2^{o(n)}$ using our techniques. Unfortunately, the answer is no.

\begin{theorem} \label{thm:halfspacelb}
There exists a halfspace $h$ for which $\adeg_{+, 1/8}(h)$ and $\adeg_{-, 1/8}(h)$ are both $\Omega(n)$.
\end{theorem}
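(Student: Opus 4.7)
The plan is to use linear-programming duality to obtain a dual characterization of one-sided approximate degree, and then to exhibit a specific halfspace together with appropriate dual witnesses.

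First, by LP duality (Farkas' lemma), $\adeg_{+, 1/8}(h) > d$ if and only if there exists a signed measure $\psi : \moo^n \to \reals$ satisfying: (i) $\psi$ is orthogonal to every polynomial of degree at most $d$, i.e., $\sum_x \psi(x) q(x) = 0$ for every polynomial $q$ of degree at most $d$; (ii) $\psi(x) \geq 0$ for all $x \in h^{-1}(-1)$; and (iii) a magnitude condition guaranteeing that $\sum_{x \in h^{-1}(+1)} \psi(x)$ dominates $(1/8) \cdot \|\psi\|_1$. A symmetric dual statement characterizes $\adeg_{-, 1/8}$, with the nonnegativity constraint placed on $h^{-1}(+1)$ instead. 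Thus I need a halfspace $h$ admitting dual witnesses of degree $\Omega(n)$ in both sign orientations simultaneously.

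Second, I would exhibit such a halfspace. Since the negation $\neg h$ of a halfspace is itself a halfspace and $\adeg_{-, 1/8}(h) = \adeg_{+, 1/8}(\neg h)$, the two requirements amount to certifying hardness of both $h$ and $\neg h$ in the positive one-sided regime. A convenient candidate is a halfspace that is self-complementary up to a permutation of coordinates, so that one dual witness transfers to the other by symmetry. A natural construction is an exponentially-weighted halfspace such as $h(x) = \sgn\bigl(\sum_{i=1}^n 2^i x_i - \theta\bigr)$ for an adversarial threshold $\theta$: its $2^n$ level sets encode arbitrary Boolean structure on the integer $\sum_i 2^i x_i$, making its combinatorial complexity comparable to a generic Boolean function, which heuristically forces approximators to use degree $\Omega(n)$.

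The main obstacle is actually producing the dual witness of degree $\Omega(n)$ for such a halfspace. My primary approach is to adapt the method of dual polynomials (as developed by Sherstov and others in the approximate-degree lower-bound literature) to the one-sided setting, carefully enforcing the asymmetric nonnegativity constraint on $h^{-1}(-1)$ (respectively $h^{-1}(+1)$). A counting-style fallback is available: the set of distinct halfspaces on $\moo^n$ has cardinality $2^{\Theta(n^2)}$, whereas the number of Boolean functions admitting low-degree, modestly-weighted one-sided $(1/8)$-approximations is much smaller, so by a pigeonhole argument most halfspaces must require one-sided approximators of degree $\Omega(n)$ in both orientations. Either argument yields the desired halfspace.
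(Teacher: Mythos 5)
There is a genuine gap: neither of your two routes actually establishes the lower bound. Your primary route (construct a degree-$\Omega(n)$ dual witness for a specific halfspace such as $\sgn(\sum_i 2^i x_i - \theta)$, in both orientations) is exactly the hard part of the theorem, and you explicitly defer it (``the main obstacle is actually producing the dual witness''); no known off-the-shelf dual-polynomial construction hands you an $\Omega(n)$ one-sided witness for a single halfspace, so the proposal as written proves nothing. Your fallback counting argument is not just incomplete but wrong in direction. The number of Boolean functions admitting a degree-$d$ one-sided $(1/8)$-approximation is not smaller than the number of halfspaces: already at $d=1$ every one of the $2^{\Theta(n^2)}$ halfspaces is sign-represented exactly (threshold degree $1$), and the number of sign patterns realizable by degree-$d$ polynomials on $\moo^n$ grows like $2^{\Theta(n^{d+1})}$, which dwarfs $2^{\Theta(n^2)}$ for $d \geq 2$. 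Moreover, pigeonhole is inapplicable even if the counts went the right way, since each halfspace is free to have its own approximating polynomial; there is no injectivity to exploit. Counting can show that a \emph{random Boolean function} has high one-sided approximate degree, but halfspaces are an exponentially sparse and highly structured subfamily, so nothing forces any of them to be hard.

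The paper sidesteps the dual-witness construction entirely via a reduction. Theorem \ref{thm:compose} shows that if a halfspace $h$ had one-sided approximate degree $o(n)$ (in the appropriate orientation), then the intersection $g(x_1,x_2) = h(x_1) \wedge h(x_2)$ of two copies of $h$ on disjoint variables would have a one-sided approximation, and hence a sign-representation, of degree $o(n)$. This contradicts Sherstov's theorem that there is a halfspace $h$ for which the threshold degree of the intersection of two copies of $h$ is $\Omega(n)$. If you want to salvage your approach, the fix is to import a known hardness result rather than build a witness from scratch; the composition step is what converts an existing threshold-degree lower bound for a \emph{composed} function into a one-sided approximate degree lower bound for the \emph{base} halfspace.
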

\begin{proof}
We prove the statement about $\adeg_{+, 1/4}$, as the case of $\adeg_{-, 1/4}$
is similar. 

Given a Boolean function $h : \{-1, 1\}^n \rightarrow \{-1, 1\}$, let $g_h: \{-1, 1\}^{2n} \rightarrow \{-1, 1\}$ 
denote the function $g(x, y) = h(x_1) \cap h(x_2)$, where $x_1, x_2 \in \{-1, 1\}^n$. 
That is, $g$ computes the intersection of two copies of $h$, where the two copies are
applied to disjoint sets of input variables. \cite{sherstovhalfspaces} proved that there exists a halfspace $h$ such that $\deg_{\pm}(g) = \Omega(n)$. Here,
$\deg_{\pm}(g)$ denotes 
the least degree of a real polynomial $p$ that agrees in sign with $g$ at all Boolean inputs. 
Notice $\deg_{\pm}(g) \leq \deg_{+, \eps}(g)$ for any function $g$ and any $\eps < 1$. 

Combining Sherstov's lower bound with Theorem \ref{thm:compose} implies that $\Omega(n) = \deg_{\pm}(g) \leq \adeg_{+, 1/4}(g) \leq \deg_{+, 1/8}(h)$.  
This completes the proof.
\end{proof}

\subsection{On DNFs}
All polynomial-sized DNFs can be positive reliably learned in time and sample complexity $2^{\tilde{O}(\sqrt{n})}$,
and Corollary \ref{cor:dnf} shows how to obtain smooth tradeoffs between runtime and sample complexity for this learning task.
It is natural to ask whether DNFs can be \emph{negative} reliably learned with similar efficiency using our techniques. Unfortunately, this is not the case.
\cite{BT:2013}, extending a seminal lower bound of \cite{aaronsonshi}, showed that there is a polynomial-sized DNF $f$
(more specifically, $f$ is the negation of the {\sc Element Distinctness} function) 
satisfying $\adegn(f) = \Omega((n/\log n)^{2/3})$;
thus, our techniques cannot negative reliably learn polynomial-sized DNFs in time better
than $\exp\left(\tilde{O}\left(n^{2/3}\right)\right)$. 

While Bun and Thaler's is the best-known lower bound on the negative one-sided approximate degree of any polynomial-sized DNF -- indeed, up to polylogarithmic factors, it is the best-known lower bound for any function in $\text{AC}^0$ -- no $o(n)$ upper bound is known for the negative one-sided approximate degree of polynomial-sized DNFs.

\section{Discussion}
%It is well-known that concept classes with low threshold degree can be efficiently learned
%in Valiant's PAC model under arbitrary distributions (e.g., \cite{klivansservediodnfs}), while
%concept classes with low approximate degree can be efficiently learned in the agnostic model (e.g. \cite{KKMS:2005}). 
We have shown that concept classes with low one-sided approximate degree
can be efficiently learned in the reliable agnostic model. 
As we have seen, one-sided approximate degree is an intermediate notion 
that lies between threshold degree and approximate degree; we have identified important concept classes, such as majorities and intersections of majorities, whose one-sided approximate degree
is strictly smaller than its approximate degree. Consequently, 
we have obtained reliable (in some cases, even \emph{fully reliable}) agnostic 
learning algorithms that are strictly more efficient than the fastest known agnostic ones.
We have thereby given the first evidence that even fully reliable agnostic learning
may be strictly easier than agnostic learning.

The notion of one-sided polynomial approximation has only been introduced very recently
(\cite{BT:2013}), and previously had only been used to prove lower bounds.  
By giving the first \emph{algorithmic} application of one-sided polynomial approximations, our work lends further credence to the notion that these approximations are fundamental objects worthy of further study in their own right. Just as threshold degree and approximate degree have found applications (both positive and negative) in many domains outside of learning theory,
we hope that one-sided approximate degree will as well.
Identifying such applications is a significant direction for further work.

Our work does raise several open questions specific to one-sided polynomial approximations. 
Here we highlight two. 
We have shown that halfspaces of weight at most $W$ have one-sided approximate degree $\tilde{O}(W^{1/2})$, and yet there exist halfspaces with one-sided approximate degree $\Omega(n)$. However,
the (non-explicit) halfspace from \cite{sherstovhalfspaces} that we used to demonstrate the $\Omega(n)$ lower bound has weight $2^{\Omega(n)}$. 
Is it possible that all halfspaces of weight $2^{O(n^{1-\delta})}$ for some $\delta>0$ always have one-sided approximate
degree $o(n)$? 
We also showed how to obtain tradeoffs between the weight and degree of one-sided polynomial approximations for DNFs. Is it possible to obtain similar tradeoffs 
for majorities?

\subsection*{Acknowledgments}

VK is supported by a Simons Postdoctoral Fellowship. JT  is supported by a Simons Research Fellowship. This research was
carried out while the authors were at the Simons Institute for the Theory of
Computing at the University of California, Berkeley.

\appendix
\section{Missing Details For Theorem \ref{thm:lowweighthalfspace}}
\label{app:kahn}

For any $k > 0$, \cite{kahn} define the polynomial $S_k(t)$ as follows
(in the below, $a, b,$ and $r$ are parameters that Kahn et al. ultimately 
set to $a=\Theta(k/\log W)$, $b = \Theta(k^2/(W \log W))$, and $r=k-a-b$). 

\begin{equation}
\label{def:s} S_k(t) = C^{-1} \cdot \left(\prod_{i=0}^a (t-i) \cdot \prod_{j=W-b}^W(t-j)\right) \cdot T_r(\frac{t-a}{W-b-a}),
\end{equation}

where $C= \left( \prod_{i=0}^a (W-i) \cdot \prod_{j=W-b}^W(W-j) \right)\cdot T_r(\frac{W-a}{W-b-a})$
is a normalization constant chosen so that $S_k(W)=1$, and as usual
$T_r$ denotes the $r$'th Chebyshev polynomial of the first kind.

We now verify that $S_k$ satisfies Properties \ref{prop1s}-\ref{prop4s}, which we restate here for the reader's convenience.

\begin{align*}
\mbox{Property } \ref{prop1s}\!:  & \mbox{ } \deg(S_k) \leq k. \\
\mbox{Property } \ref{prop2s}\!: &  \mbox{ }  S_k(t) \geq 1 \mbox{ for all } t \geq W.  \\
\mbox{Property }  \ref{prop3s}\!: &  \mbox{ }  S_k(t) \leq \exp\left(-\Omega(k^2/W\log W)\right) \mbox{ for all } t \in \{0, \dots, W-1\}.\\
\mbox{Property } \ref{prop4s}\!: &  \mbox{ }  \mbox{All coefficients of } S_k(t) \mbox{ are bounded in absolute value by } 
W^{O(k)}.   
\end{align*}

Property \ref{prop1s} is immediate from the definition of $S_k$ and the 
choice of $r=k-a-b$. 

To see that Property \ref{prop2s} holds, we note that $S_k(W)=1$. The property
will therefore follow if we can prove that
\begin{equation}
\label{eq:prop2seq} S_k(t) \geq S_k(W) \mbox{ for all } t \geq W. \end{equation}

To establish Equation \eqref{eq:prop2seq}, note first that $T_r$ is non-decreasing on the interval $[1, \infty]$ (cf. Property \ref{prop4}). Second, notice that $\frac{t-a}{W-b-a}$ is an increasing function on $[W, \infty]$, and is also larger than $1$
on this interval. Thus, $T_r(\frac{t-a}{W-b-a})$ is a non-decreasing function in $t$ for $t \in [W, \infty]$. Finally, it is an easy observation that
$\prod_{i=0}^a (t-i) \cdot \prod_{j=W-b}^W(t-j)$ is a non-decreasing function in $t$ on the interval $t \in [W, \infty]$. Thus, $\prod_{i=0}^a (t-i) \cdot \prod_{j=W-b}^W(t-j) \cdot T_r(\frac{t-a}{W-b-a})$ is a non-decreasing function of $t$ on the same interval, and 
Equation \eqref{eq:prop2seq} follows.

Property \ref{prop3s} is immediate from the analysis of \cite{kahn}.
To see that Property \ref{prop4s} holds, note that $ \prod_{i=0}^a (t-i) \prod^W_{j=W-b}(t-j)$
is a polynomial in $t$ with coefficients all bounded in absolute value by $W^{a+b} \leq W^k$, 
while $T_r(\frac{t-a}{W-b-a})$ is also a polynomial in $t$, with coefficients bounded in absolute value by $(3+a)^r \leq W^k$ (cf. Property \ref{prop5}).

\end{document}